\newcommand{\ADV}{\ensuremath{\mathsf{ADV}}\xspace}
\newcommand{\LA}{\ensuremath{\mathsf{LA}}\xspace}
\newcommand{\OMCS}{\ensuremath{\mathsf{OMcS}}\xspace}
\newcommand{\dynamic}{\textsc{Dynamic}\xspace}
\newcommand{\SLR}{\textsc{Rounding}\xspace}
\newcommand{\GFQ}{\ensuremath{\mathsf{GFQ}}\xspace}
\newcommand{\dd}{\ensuremath{\mathsf{d}}\xspace}
\newcommand{\rr}{\ensuremath{\mathsf{r}}\xspace}
\newcommand{\CR}{\ensuremath{\mathsf{CR}}\xspace}
\newcommand{\OPT}{\ensuremath{\mathsf{OPT}}\xspace}
\newcommand{\ALG}{\ensuremath{\mathsf{ALG}}\xspace}
\newcommand{\PF}{\ensuremath{\mathsf{PF}}\xspace}
\newcommand{\NSW}{\ensuremath{\mathsf{NSW}}\xspace}
\newcommand{\LinLA}{\ensuremath{\mathsf{LiLA}}\xspace}
\DeclareMathOperator*{\argmax}{\arg\max}
\newcommand{\rsag}{\ensuremath{\textsc{r-SetAside-gfq}}\xspace}
\newcommand{\dsag}{\ensuremath{\textsc{d-SetAside-gfq}}\xspace}
\newcommand{\rsap}{\ensuremath{\textsc{r-SetAside-pf}}\xspace}
\theoremstyle{plain}
\newtheorem{theorem}{Theorem}[section]
\newtheorem{corollary}[theorem]{Corollary}
\theoremstyle{definition}
\newtheorem{definition}{Definition}
\theoremstyle{definition}
\theoremstyle{definition}
\newtheorem{lettereddef}{Definition}[section]
\title{Online Multi-Class Selection with Group Fairness Guarantee}
\author{%
  Faraz Zargari\\
  University of Alberta\\
  \texttt{fzargari@ualberta.ca} \\
  \And
  Hossein Nekouyan\\
  University of Alberta\\
  \texttt{nekouyan@ualberta.ca} \\
  \And
  Lyndon Hallett\\
  University of Alberta\\
  \texttt{lhallett@ualberta.ca} \\
  \And
  Bo Sun\\
  University of Ottawa,  
  Vector Institute\\
  \texttt{bo.sun@uottawa.ca} \\
  \And
  Xiaoqi Tan\\
  University of Alberta\\
  \texttt{xiaoqi.tan@ualberta.ca} 
}
\begin{document}

\maketitle

\begin{abstract}
We study the online multi-class selection problem with group fairness guarantees, where limited resources must be allocated to sequentially arriving agents. Our work addresses two key limitations in the existing literature. First, we introduce a novel lossless rounding scheme that ensures the integral algorithm achieves the same expected performance as any fractional solution. Second, we explicitly address the challenges introduced by agents who belong to multiple classes. To this end, we develop a randomized algorithm based on a relax-and-round framework. The algorithm first computes a fractional solution using a resource reservation approach---referred to as the \textit{set-aside} mechanism---to enforce fairness across classes. The subsequent rounding step preserves these fairness guarantees without degrading performance. Additionally, we propose a learning-augmented variant that incorporates untrusted machine-learned predictions to better balance fairness and efficiency in practical settings.
\end{abstract}

\section{Introduction}\label{section-Introduction}
Online fair allocation has attracted increasing attention in recent years, as addressing algorithmic bias in decision-making has become a major concern in artificial intelligence~\cite{balkanski2024fair,ma2024promoting,correa2021fairness, sinha2023no, hajiaghayi2024fairness, golrezaei2024online, hosseini2023class, banerjee2022proportionally, huang2023online, banerjee2022online}. Most existing work considers settings where agents are offline and resources arrive sequentially, requiring allocation strategies that maintain fairness across agents.  

In contrast, this paper focuses on algorithmic fairness in the \emph{online selection} problem, where the roles are reversed: agents arrive sequentially, and the decision-maker must allocate limited, offline resources by either accepting or rejecting each request immediately and irrevocably. This setting introduces unique algorithmic challenges for fairness, as the decision-maker must strategically reserve resources despite uncertainty about future arrivals. Despite its practical relevance, fairness in online selection with sequential agent arrivals has received comparatively little attention in the literature.

Recognizing the algorithmic challenges of ensuring fairness among online agents, a recent study~\cite{zargari2025online} introduces a weaker fairness notion known as \emph{group fairness}. In this model, each agent belongs to a single group (or class), and the algorithm aims to ensure fairness across groups rather than individual agents, thereby expanding the space of feasible fairness guarantees. However, the model in~\cite{zargari2025online} has two key limitations. First, it focuses on \emph{fractional allocation} (i.e., resources are divisible), whereas many real-world online allocation problems are inherently integral (e.g., allocation of public houses in social housing programs). Second, it assumes that each agent belongs to exactly one group, while in practice agents are often \emph{multi-labeled}—i.e., simultaneously associated with multiple groups (e.g., by region, gender, etc.). Ensuring fairness across such overlapping groups introduces additional complexity and interdependence.

Motivated by these limitations, this paper introduces and studies the Online Multi-class Selection (\OMCS) problem with group fairness guarantees. In \OMCS, online agents may belong to one or more of a fixed number of classes (or groups), and the goal is to maintain certain fairness notions across these groups without prior knowledge of the number of agents in each group. Many real-world applications fall under this setting and raise fairness concerns. For example, in cloud job scheduling~\cite{zhang2017optimal, zhang2015online}, the system must allocate limited CPU (or GPU) resources fairly across jobs (i.e., agents), which may be multi-labeled based on user type, geographic region, and other attributes. Ensuring fairness across such overlapping groups is critical for mitigating unequal access to public computing power. In this work, we aim to design algorithms that achieve optimal group fairness guarantees in \OMCS. This objective is particularly challenging in the multi-labeled setting, where accepting a single agent can simultaneously enhance the utility of multiple classes, thereby complicating the task of maintaining the desired fairness guarantees.

\subsection{Our Contributions}
We examine \OMCS under two fairness criteria: Group Fairness by Quantity (\GFQ) and $\beta$-Proportional Fairness ($\beta$-\PF). Under \GFQ, we require that each group receive a fixed quota of resources. For any \GFQ specification, we present two algorithms: an optimal deterministic algorithm (Theorem \ref{theorem-GFQ-deterministic}), and a randomized algorithm that applies a lossless rounding scheme to any optimal solution in the fractional setting (Theorem \ref{theorem-gfq-randomized}). Our randomized approach operates within a relax-and-round framework and introduces a novel lossless online rounding scheme. For the $\beta$-\PF objective, we first develop an optimal fractional solution specifically designed for the multi-labeled setting (Theorem \ref{theorem-proportional-fairness-overlapping}), and then convert it into an integral allocation via a lossless rounding procedure that preserves the fractional performance guarantee exactly (Theorem \ref{theorem-beta-proportional-fair}).

Although our proposed algorithm achieves the optimal fairness guarantee, the design based on worst-case analysis is often too pessimistic for practical applications. To mitigate this, we leverage the learning-augmented algorithm framework by incorporating black-box machine-learned advice from a fair online allocation. This algorithm can significantly improve fairness when the advice is fair (i.e., consistency) while still ensuring a worst-case fairness guarantee even if the advice is entirely unfair (Theorems~\ref{theorem-LA-beta-PF}). Technically, our main contribution is a \emph{lossless online rounding scheme} inspired by the lossless online correlated $k$-rental scheme introduced in~\cite{nekouyan2025online}, which converts any fractional solution into an integral one without any expected performance loss. Beyond \OMCS, this scheme is of independent interest and can be applied broadly to online selection and revenue-management problems, closing the integrality gap and yielding tight bounds under fairness constraints.

\subsection{Related Work}
Online selection problem has been extensively studied under various assumptions about arrival sequences, including the random order model in the secretary problem \cite{gardner1970mathematical, ajtai2001improved, pmlr-v139-correa21a}, the IID arrivals in the prophet inequality \cite{correa2019recent,samuel1984comparison}, and adversarial arrivals in online search problems \cite{lorenz2009optimal, jiang2021online}. In this paper, the \OMCS framework builds upon the adversarial model. Below, we briefly review the most relevant works to our study.

\noindent\textbf{Online selection.}
Adversarial online selection problem assumes the valuations of online arrivals are bounded within a finite support. Under this assumption, the classic $k$-search problem introduced in \cite{lorenz2009optimal, el2001optimal} and the online knapsack problem in \cite{zhou2008budget, chakrabarty2008online} developed threshold-based algorithms that can achieve optimal worst-case performance under competitive analysis. \cite{tan2023threshold} applied similar approaches in the setting of online selection with convex costs, demonstrating the optimality of their approach for large-inventory scenarios and asymptotic optimality for small-inventory cases. Despite these advancements, most existing works heavily rely on the large inventory assumption in their analysis, and it is inherently challenging to design algorithms for online selection problems in small inventory settings with tight performance bounds.

\noindent\textbf{Online rounding.} 
The online rounding framework has recently attracted significant attention from both computer science \cite{fahrbach2022edge} and operations research \cite{ma2024randomizedroundingapproachesonline}. These rounding schemes primarily rely on the relax-and-round approach. Specifically, in the first step, the problem is relaxed and formulated as a linear program. In the second step, the solution is rounded to construct a computationally efficient online decision-making policy~\cite{buchbinder2009online, bansal2015polylogarithmic, cohen2019tight}. Huang et al.~\cite{huang2020adwords} introduced the Online Correlated Selection (OCS) algorithm for the online matching problem. This algorithm belongs to the class of randomized rounding approaches and establishes negative correlations between sequential decisions. Subsequently, Fahrbach et al. \cite{fahrbach2022edge} introduced an OCS-based algorithm for edge-weighted online bipartite matching, and Huang et al. \cite{huang2024adwords} adapted similar techniques for the Adwords problem. Furthermore, a recent work~\cite{nekouyan2025online} develops a new online rounding scheme, called the online correlated $k$-rental scheme, for online selection problems with reusable resources. This scheme losslessly rounds fractional solutions into integral decisions using a single random seed sampled at the beginning of the algorithm.

\noindent\textbf{Online fair allocation.} Fairness in resource allocation has been a key area of research in computer science, operations research, and economics, resulting in a wide range of studies on the equitable distribution of divisible and indivisible resources (e.g.,~\cite{steinhaus1948problem, dubins1961cut}). \cite{hosseini2023class} investigated class fairness in online bipartite matching using the concept of \textit{envy-freeness up to one item}, and Banerjee et al. \cite{banerjee2022proportionally} focused on proportional fairness in fractional online matching. Other works, such as~\cite{huang2023online, banerjee2022online}, explored the maximization of Nash social welfare in similar settings. However, these studies assume offline agents with online resource arrivals, neglecting scenarios with sequential agent arrivals. Our work addresses this gap by examining settings where resources are offline and agents arrive sequentially. This scenario poses unique challenges, as irrevocable allocations can disadvantage future agents given fixed resources. Closest to our setting, \cite{jazi2024online} explores quantity-based fairness in fractional online allocation. Quantity-based fairness, relying on predefined criteria, complicates analyzing fairness-efficiency trade-offs. Recently, \cite{zargari2025online} extended this to utility-based fairness in the fractional setting; however, extending these fractional results to integral allocations with multi-labeled arrivals remains significantly challenging.

\section{Problem Formulation and Preliminaries}\label{section-Problem-Formulation}
In this section, we introduce and formulate the online multi-class selection problem, and formalize the notation of efficiency and group fairness in this paper.

\subsection{\OMCS: Problem Statement and Assumptions}
We consider an online multi-class selection problem (\OMCS) defined as follows: A seller has an initial inventory of \( B \) units of indivisible resources to allocate to a sequence of agents arriving one at a time. Upon the arrival of agent \( t \in [T] \), the agent submits a request for one unit of the resource along with a valuation \( v_t \) (i.e., their willingness to pay). The seller must make an immediate and irrevocable binary decision \( x_t \in \{0,1\} \): setting \( x_t = 1 \) indicates accepting the offer and allocating one unit of the resource; \( x_t = 0 \) indicates rejection. In \OMCS, each agent \( t \) is associated with a label set $\mathcal{J}_t \subseteq [K]$, representing the classes to which the agent belongs. If $|\mathcal{J}_t| = 1$, we refer to agent $t$ as a \emph{\bfseries single-labeled} agent; if $|\mathcal{J}_t| > 1$, the agent is \emph{\bfseries multi-labeled}.

In the \OMCS\ problem, we assume that the valuations of agents in each class $j \in [K]$ are bounded within the interval $[1, \theta_j]$, where $\theta_j$ is referred to as the \textit{fluctuation ratio} of class $j$. For agents belonging to multiple classes, their valuations are bounded by $[1, \min_{j \in \mathcal{J}_t} \theta_j]$. A larger $\theta_j$ indicates greater variability in valuations within the class, while a smaller $\theta_j$ implies more uniformity. Without loss of generality, we assume $\theta_1 \leq \theta_2 \leq \cdots \leq \theta_K$.  

A commonly studied special case in the online selection literature~\cite{el2001optimal, lorenz2009optimal, jiang2021online, sun2021pareto, tan2023threshold} assumes a universal fluctuation ratio, i.e., $\theta_j = \theta$ for all $j \in [K]$. Such interval bounds may be adopted as standard modeling assumptions or derived from trusted predictions~\cite{jiang2021online,hajihashemi2025multimodel}. We assume that the initial inventory $B$, the number of classes $ K $, and the fluctuation ratios $\{\theta_j\}_{j \in [K]}$ are known a priori, while all other information--including the valuations $\{v_t\}_{t \in [T]}$, the total number of arrivals $T$, and the label sets $\{\mathcal{J}_t\}_{t \in [T]}$--remains unknown.

\subsection{Efficiency and Fairness Metrics} 
We consider the following performance metrics to evaluate the efficiency and fairness of online algorithms for \OMCS.

\noindent\textbf{Efficiency metrics: competitive ratio in \textit{utility} maximization.}
A seller’s primary goal is to maximize the total utility of all agents regardless of their groups, i.e., $ \sum_{t} v_t x_t $, subject to the resource constraint $ \sum_t x_t \leq B $. For a given arrival instance \( I = \{(v_1, \mathcal{J}_1), (v_2, \mathcal{J}_2), \dots, (v_T, \mathcal{J}_T)\} \), let \( \OPT(I) \) represent the optimal achievable total utility in the offline setting, where the sequence \( I \) is known in advance. \( \OPT(I) \) can be determined by solving the following integer program:
\begin{align}\label{p1}
    \OPT(I) = \max_{x_t \in \{0,1\}} \ \sum\nolimits_{t} v_t x_t, \hspace{8pt}\text{s.t.} \hspace{2pt} \sum\nolimits_{t} x_t \leq B.
\end{align}
In the online setting, we use the \textit{competitive ratio} as our efficiency metric. Let \( \ALG(I) \) be the revenue of an online algorithm \( \ALG \). The goal is to minimize the worst-case competitive ratio, defined as $\CR^* \coloneqq \min_{\ALG} \max_{I \in \Omega} \frac{\OPT(I)}{\mathbb{E}[\ALG(I)]}$, where \( \Omega \) is the set of all possible arrival sequences within the intervals characterized by $\{\theta_j\}_{j \in [K]}$, and \( \OPT(I) \) is the offline optimal revenue. 

\noindent\textbf{Fairness metrics: group proportionality by \textit{quantity} and \textit{utility}.}
We focus on two fairness metrics. The first one is \textit{quantity-based fairness}, which requires that a minimum amount of resources be allocated to agents from each class. This notion, referred to as \textit{Group Fairness by Quantity}, is formally defined as follows:

\begin{definition}[Group Fairness by Quantity (\GFQ)] \label{def_gfq}
    An allocation $\mathbf{x} := [x_1,\dots, x_T]$ satisfies group fairness by quantity if $\sum_{t \in [T]} x_t \cdot \boldsymbol{1}_{\{j \in \mathcal{J}_t\}} \geq m_j$ holds for all \( j \in [K] \), where $\textbf{m}:=\{m_j\}_{j\in[K]}$ is a pre-determined fairness requirement. 
\end{definition}

Under the multi-label setting, an agent with multiple labels can simultaneously help satisfy the \GFQ constraints for several groups. However, once these quantity-based constraints are fulfilled, the seller’s decision depends solely on valuations, and labeling information is ignored. This approach overlooks fairness during the allocation process and may not suit real-world applications where fairness must be maintained throughout. To address this, we introduce a \textit{utility-based fairness} metric that evaluates fairness based on agents’ utilities. This encourages the seller to favor multi-labeled agents, enhancing both overall utility while promoting a more balanced allocation across groups.

\begin{definition}[$\beta$-Proportional Fairness ($\beta$-\PF)]    \label{def_proportional_fairness} 
    Let utility of class \( j \) with allocation $\mathbf{x} := [x_1,\dots, x_T]$ be \( U_j(\mathbf{x}) = \sum\nolimits_{t \in [T]} v_t \cdot x_t \cdot \boldsymbol{1}_{\{j\in \mathcal{J}_t\}}\), where $\boldsymbol{1}_{\{j\in \mathcal{J}_t\}}$ is an indicator function. For \( \beta \geq 1 \), an allocation \( \mathbf{x} \) is $\beta$-proportionally fair if, for every other allocation \( \mathbf{w} \), the following inequality holds: \( \frac{1}{K} \sum_{j\in[K]} \frac{U_j(\mathbf{w})}{U_j(\mathbf{x})} \leq \beta \).\footnote{We assume the fraction $x/y$ for non-negative $x$ and $y$ is equal to 0 when $x=y=0$, while $x/y=+\infty$ when $y=0$ but $x>0$.}
\end{definition}

An online algorithm is said to be $\beta$-proportionally fair if it consistently produces allocations that satisfy $\beta$-\PF under all possible arrival instances. When  $\beta = 1$ , the allocation is referred to as \textit{proportional fair} and has been widely used in network resource allocation (e.g., \cite{chen2019proportionally, kelly1998rate, khan2016fairness, kelly1997charging}) and fair clustering algorithms (e.g., \cite{caragiannis2024proportional, li2021approximate, micha2020proportionally, chen2019proportionally}). However, in online settings, due to future uncertainties, exact 1-\PF is generally unattainable. Thus, we focus on its $\beta$-approximation, called $\beta$-\PF \cite{banerjee2022proportionally, micha2020proportionally}. This fairness notion is widely used in the literature and it is closely related to the Nash Social Welfare (\NSW). Specifically, if an algorithm is $\beta$-\PF, it always produces $\beta$-\NSW. 

\section{\OMCS with Group Fairness by Quantity}\label{section-GFQ}
In this section, we investigate \OMCS under the \GFQ constraints. Based on Definition \ref{def_gfq}, a reserved allocation $\textbf{m}$ is provided in advance. Therefore we can reformulate the problem in \eqref{p1} by adding a new set of \GFQ constraints as $\sum\nolimits_{t\in [T]} x_t \cdot  \boldsymbol{1}_{ \{j_t = j\}} \geq m_j$ for all $j\in[K]$. 
We aim to design an algorithm to maximize the efficiency (i.e., minimizing the competitive ratio) for a given \GFQ requirement $\textbf{m}$. 

\subsection{Warm Up: An Optimal Deterministic Set-Aside Algorithm} 
We begin by presenting a simple deterministic algorithm, termed \dsag, for \OMCS under \GFQ constraints and show that it is optimal among all deterministic algorithms. \dsag is a threshold-based algorithm and works as follows: upon receiving the first $m_j$ agents from each class $j \in [K]$, \dsag ensures the corresponding fairness guarantee for that class, by automatically accepting these agents regardless of their requested valuation, until the fairness guarantee for the class is met. As a result, $M =\sum_{j\in [K]} m_{j}$  units out of $B$ resource items are reserved, or \textit{\bfseries set-aside}, to meet the fairness requirements (hence the term `set-aside' in \dsag). The remaining $B - M$ items are then allocated to the arriving agents based on a \textit{threshold}, denoted by $\boldsymbol{\lambda} = \{\lambda_i\}_{i \in [B-M]}$, where $ \lambda_i $ denotes the threshold when $ i $ units have been allocated. More specifically, upon the arrival of an agent at time $t$, the algorithm first verifies whether the \GFQ constraints for all associated classes are satisfied. If any of these constraints remain unmet, the agent is accepted unconditionally, regardless of its valuation. Otherwise, the agent is accepted only if its value exceeds the threshold for allocation at time $t$; if not, the agent is rejected. Let $C_j =  B - \max_{i\in[j-1]} \{m_i\}$ and $D_j = \sum_{i=1}^{j-1} [m_i - \max_{l\in[i-1]} \{m_l\}]^+\cdot\theta_i$, where $[\cdot]^+=\max\{\cdot,0\}$. In the following theorem, we formally present our design of the optimal threshold $\boldsymbol{\lambda}^*$ and the competitive ratio associated with it.

\begin{theorem}[\OMCS with \GFQ: Optimal Deterministic Algorithm]\label{theorem-GFQ-deterministic}
    \dsag achieves the optimal competitive ratio of among all deterministic algorithms, denoted by $\alpha^*$, if and only if the threshold $\boldsymbol{\lambda}^* = \{\lambda_0^*, \lambda_1^*, \ldots, \lambda_{\tau}^*, \ldots, \lambda_{B-M}^*\}$ is designed as follows:

    (i) If $\frac{B}{\alpha^*}\geq M$: the thresholds are split into two parts
    \begin{itemize}[leftmargin=*]
        \item $\lambda_0^* = \lambda_1^* = \ldots = \lambda_{\tau}^* = 1$ and $\lambda_{B-M}^* = \theta_{K}$, where $\tau$ is the minimum integer in $\{0, 1, \ldots, B-M-1\}$ such that
        $ \tau + 1 \ge \frac{B}{\alpha^*}-M.$

        \item $\{\alpha^*, \lambda^*_{\tau+1}, \ldots, \lambda^*_{B-M-1}\}$ is the unique set of $B -M-\tau + 1$ positive real numbers that satisfy the system of equations:
            \begin{align*}
                \alpha^* &= \frac{\Delta^{\tau+1}}{\tau+1} = \frac{\Delta^{i+1} - \Delta^{i}}{\lambda_{i}^*} \quad \forall i\in [\tau+1, B-M-1],
            \end{align*}
            where for some $\lambda^*_i \in [\theta_{j-1}, \theta_j]$, $\Delta^i = C_j \cdot \lambda^*_i + D_j$.
    \end{itemize}
    
    (ii) If $\frac{B}{\alpha^*}< M$: In this case, $\lambda^*_{B-M} = \theta_K$ and $\{\alpha^*, \lambda^*_{0}, \ldots, \lambda^*_{B-M-1}\}$ is the unique set of $B -M + 1$ positive real numbers that satisfy the system of equations:
    \begin{align*}
        \alpha^* &= \frac{\Delta^{0}}{M} = \frac{\Delta^{i+1} - \Delta^{i}}{\lambda_{i}^*} \quad \forall i\in [0, B-M-1],
    \end{align*}
    where for some $\lambda^*_i \in [\theta_{j-1}, \theta_j]$, $\Delta^i = C_j \cdot \lambda^*_i + D_j$.
\end{theorem}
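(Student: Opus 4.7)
The plan is to follow the classical threshold-design paradigm for adversarial online selection (in the spirit of \cite{el2001optimal, lorenz2009optimal, sun2021pareto, tan2023threshold}), adapted to handle the set-aside mechanism and per-class fluctuation ratios $\{\theta_j\}$. I would argue in three stages: achievability of $\alpha^*$ by \dsag, a matching lower bound for every deterministic algorithm, and well-definedness/uniqueness of the threshold sequence $\boldsymbol{\lambda}^*$.

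For achievability, I would track \dsag on an arbitrary instance $I$ and establish the invariant that when $i$ units have been allocated in the threshold-based phase, the offline optimum on any continuation whose remaining valuations do not exceed $\lambda_i^*$ is at most $\Delta^i$. The intuition is that the at most $M$ forced set-aside acceptances can cost at most $D_j$ in surrendered utility against the adversary's worst-case labeling, while the remaining $C_j$ slots contribute at most $C_j\lambda_i^*$. The defining recursion $\alpha^*\lambda_i^* = \Delta^{i+1} - \Delta^i$ then yields $\alpha^*\cdot\ALG(I) \geq \OPT(I)$ at every stopping configuration where \dsag would be about to reject. The case split reflects whether the first $\tau+1$ thresholds can be pinned at $1$ (when $B/\alpha^* \geq M$, the set-aside leaves enough threshold-based slots that \dsag can afford to accept any value-$1$ agent early) or whether all thresholds must exceed $1$ (when $B/\alpha^* < M$, the base constraint $\alpha^* = \Delta^0/M$ becomes binding and absorbs the $\tau$-run).

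For the lower bound, I would construct a family of adversarial instances parameterized by a stopping level. The adversary first sends $m_j$ agents of class $j$ in a class order that forces any deterministic algorithm to realize exactly the cost $D_j$ on set-aside acceptances---namely, the class with the largest quota appears first with value $\theta_j$, and subsequent classes add only their incremental quota at their own $\theta$. It then streams batches of agents with values $\lambda_0^*+\epsilon, \lambda_1^*+\epsilon, \ldots$, each batch long enough to consume the remaining inventory, and halts at an adversarially chosen level. A Yao-style dominance argument shows that any deterministic policy other than \dsag is strictly dominated on some truncation, and the \dsag policy achieves $\OPT/\ALG = \alpha^*$ at every stopping level by the defining recursion. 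Uniqueness of $\boldsymbol{\lambda}^*$ follows from monotonicity: each $\lambda_{i+1}^*$ is a strictly increasing function of $\lambda_i^*$ through the piecewise-linear map $\lambda \mapsto C_j\lambda + D_j$, so the terminal condition $\lambda_{B-M}^* = \theta_K$ pins down $\alpha^*$ uniquely in each case.

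The main obstacle I anticipate is the piecewise-linear handling of $\Delta^i$ across the breakpoints $\{\theta_j\}$: a single recursion step $i \to i+1$ may straddle a breakpoint, so the index $j$ in $(C_j, D_j)$ can change mid-step, forcing a sub-segmentation of each interval $[\lambda_i^*, \lambda_{i+1}^*]$ to consistently maintain $\alpha^* \lambda_i^* = \Delta^{i+1} - \Delta^i$. The multi-labeled structure enters implicitly here through the adversary's freedom to choose labelings that maximize $D_j$; showing that single-labeled adversaries already saturate the lower bound---while the upper bound must handle arbitrary multi-labeled arrivals, where an accepted agent may simultaneously retire multiple fairness quotas and thus shrink the effective set-aside burden---is the most delicate bookkeeping in the proof.
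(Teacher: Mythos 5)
Your overall architecture matches the paper's proof. The achievability half is the same potential argument: bound $\OPT$ by $\Delta^i = C_j\lambda_i^* + D_j$ once $i$ threshold-based units have been allocated, use monotonicity of the thresholds to control the value of items the algorithm rejected, and close the ratio with the recursion $\alpha^*\lambda_i^* = \Delta^{i+1}-\Delta^i$. The lower bound is the same truncated, non-decreasing adversarial stream whose stopping level forces the full system of equations; the case split on $\frac{B}{\alpha^*}$ versus $M$ (and the role of $\tau$) is handled identically; and uniqueness follows the same terminal-condition argument pinning $\lambda^*_{B-M}=\theta_K$.

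The one place your plan diverges---and would fail as stated---is the claim that single-labeled adversaries already saturate the lower bound. The paper's hard instance $I^{i-\GFQ}$ deliberately appends, to each batch of $K$ single-labeled copies of a valuation, one additional copy carrying the full label set $\{1,\dots,K\}$. These multi-labeled copies are what allow the offline optimum to satisfy all $K$ quotas using only $\max_i m_i$ acceptances rather than $\sum_i m_i$, and this is precisely the origin of the constants $C_j = B-\max_{i\in[j-1]}\{m_i\}$ and of the incremental terms $[m_i - \max_{l\in[i-1]}\{m_l\}]^+\cdot\theta_i$ inside $D_j$. With a purely single-labeled adversary, $\OPT$ is itself forced to spend $\sum_i m_i$ slots on quota-filling, so the best achievable ratio falls strictly below the $\alpha^*$ defined through $\Delta^i = C_j\lambda_i^*+D_j$, and your lower bound would not meet your upper bound. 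The multi-labeled copies must appear in the hard instance, not only in the achievability bookkeeping. A minor further note: ``Yao-style dominance'' is a misnomer here---the lower bound is a direct truncation argument against deterministic algorithms and involves no distribution over inputs---though the substance of the truncation argument you describe is the right one.
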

The proof of this theorem, as well as the complete pseudocode of the algorithm \dsag, is provided in Appendix~\ref{app:gfq-determinstic}. Additionally, in the special case of $K=1$, \OMCS with \GFQ guarantee is closely related to the problem introduced \cite{zhang2015online2, jiang2021online} and with an extra assumption of $m_1=0$ it recovers the existing optimal result of \cite{tan2023threshold}. However, having multiple classes and  \GFQ constraints significantly increases the complexity of the problem. 

\subsection{Optimal Randomized Algorithm: \rsag for \OMCS with \GFQ} 
We propose a randomized algorithm, termed Randomized Set-Aside with \GFQ guarantee (\rsag), and prove that it attains the optimal competitive ratio among all algorithms. \rsag operates in two phases: in the first phase, the integral problem is relaxed to a fractional setting, and the optimal online decisions are computed in this relaxed space. In the second phase, these optimal fractional decisions are rounded to obtain a feasible integral solution. This approach is inspired by online correlated selection techniques originally developed in the online matching literature (e.g.,\cite{huang2020adwords,fahrbach2022edge}). Recently, \cite{nekouyan2025online} introduced a lossless online correlated $k$-rental rounding scheme that employs a single random seed to round fractional solutions in online selection problems with reusable resources, where items become available again after their rental periods expire. In contrast, for non-reusable settings, the rounding procedure follows Algorithm \ref{alg_GFQ_Second_Layer}, in which a new random seed is independently sampled in each round to preserve the desired correlation structure across decisions. This random variable is drawn from a carefully designed Bernoulli distribution to ensure that each item's allocation remains synchronized across rounds, such that an item becomes available with the desired probability for allocation to an agent in each round. Building on this concept, our rounding scheme is designed to allocate items such that the expected performance of the integral solution mirrors that of the optimal fractional solution at every step. This ensures the algorithm maintains competitiveness and optimality at every step. 

\begin{figure}[t]
  \centering
  \begin{minipage}[t]{0.50\textwidth}
    \begin{algorithm}[H]            
        \SetKwInput{KwInput}{Input}
        \SetKwInput{KwInitialization}{Initialize}
        \SetKwComment{Com}{\textcolor{gray}{$\triangleright$} }{}
    	\KwInput{$B$; $ \{m_j, \theta_j\}_{\forall j\in [K]} $}
        \textbf{Initialize:} {Unit index $\kappa_1 = 1$ and $\{\kappa^{j}_1 = 1\}_{\forall j \in [K]}$, utilization level $z_0=0$}
        
            \While{agent $ t$ arrives}{
                Obtain agent $t$'s information $(v_t, \mathcal{J}_t)$
                
                \eIf{$\kappa_{t}^{j}\leq m_{j}$ for any $j\in \mathcal{J}_t$}{$x_t = 1$
                
                Update $\kappa^{j}_{t+1} = \kappa^{j}_t + x_t,$ $\forall$ $j\in \mathcal{J}_t$}{\(\tilde{x}_t = \textsc{Frac-GFQ}(B, \{m_j, \theta_j\}_{\forall j})\) \label{Frac_GFQ}
                
                Update $z_t = z_{t-1}+\tilde{x}_t$
    
                $x_t =\SLR(\kappa_t, z_t, z_{t-1}, \tilde{x}_t)$
                
                Update $\kappa_{t+1} = \kappa_t + x_t$.}
    
                \vspace{2pt}
                }
                \caption{Randomized Set-Aside with \GFQ guarantee (\textsc{r-SetAside-gfq})}
            \label{alg_GFQ_Randomized}
    \end{algorithm}
  \end{minipage}\hfill
  \begin{minipage}[t]{0.47\textwidth}
    \begin{algorithm}[H]           
        \SetKwInput{KwInput}{Input}
        \SetKwInput{KwOutput}{Output}
        \SetKwInput{KwInitialization}{Initialize}
        \SetKwComment{Com}{\textcolor{gray}{$\triangleright$} }{}
        \KwInput{$\kappa$, $z_n$, $z_p$, $\tilde{x}$ } \vspace{1pt}
                \uIf{$\lceil z_n \rceil = \lceil z_p \rceil = \kappa$}{\vspace{1pt}$x = \begin{cases}1 & \text{w.p. }  \tilde{x} / (\lceil z_p \rceil - z_p)\\ 0 & \text{otherwise}\end{cases}$\vspace{1pt}}
                \ElseIf{$\lceil z_n \rceil \neq \lceil z_p \rceil$}{ \uIf{$\kappa = \lceil z_p \rceil$}{\vspace{2pt}$x = 1 \quad \text{w.p. } 1$\vspace{2pt}}
                \ElseIf{$\kappa = \lceil z_n \rceil$}{\vspace{1pt}
                $x = \begin{cases}1 & \text{w.p. } \frac{z_n - \lceil z_p\rceil}{\left(1 -\lceil z_p \rceil + z_p \right) \cdot (\lceil z_n \rceil - \lceil z_p\rceil)}\\ 0 & \text{otherwise}\end{cases}
                $\vspace{2pt}}\vspace{2.5pt}
                }
                \KwOutput{$x$}
                \caption{Lossless Online Rounding (\SLR)}
            \label{alg_GFQ_Second_Layer}
    \end{algorithm}
  \end{minipage}
  \label{fig:algorithms_side_by_side}  
\end{figure}

As previously mentioned, the optimal fractional decision at time $t$, denoted by $\tilde{x}_t \in [0,1]$, is computed during the first relaxation phase and can be obtained using any optimal online fractional algorithm, denoted as \textsc{Frac-GFQ} (line \ref{Frac_GFQ}). An example of such an algorithm is provided in Algorithm~\ref{alg-fractional-GFQ} in Appendix~\ref{appendix-example-frac-gfq}. In the subsequent stage, an integral solution \(x_t\) is obtained using the lossless online rounding scheme of \SLR, given in Algorithm \ref{alg_GFQ_Second_Layer}. This scheme ensures that the expected utility of the integral allocation aligns with the utility achieved in the fractional setting. Let $z_t = \sum_{t'=1}^t \tilde{x}_{t'}$ denote the fractional utilization level at time $t$. Specifically, if the fractional solution continues allocating item $\lceil z_{t-1}\rceil$, the rounding procedure allocates that item to the agent \(t\) with probability $\tilde{x}_t / (\lceil z_{t-1} \rceil - z_{t-1})$, provided it is still available. On the other hand, if the fractional setting initiates the allocation of a new item, and the item $\lceil z_{t-1} \rceil$ in the integral solution remains available, it is allocated with probability 1. Otherwise, item $\lceil z_t \rceil $ is allocated probabilistically, maintaining the expectation of utility equivalence with the fractional solution. The following theorem states the main result regarding this multi-stage algorithm.

\begin{theorem}[\OMCS with \GFQ: Optimal Randomized Algorithm]\label{theorem-gfq-randomized}
    Given a \GFQ requirement $\mathbf{m}$, Algorithm~\ref{alg_GFQ_Randomized} achieves the same competitive ratio as \textsc{Frac-GFQ} for \OMCS under the \GFQ constraints, namely, the rounding scheme of Algorithm \ref{alg_GFQ_Second_Layer} is lossless.
\end{theorem}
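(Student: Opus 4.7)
The goal is to prove that Algorithm~\ref{alg_GFQ_Randomized} achieves the same expected utility as \textsc{Frac-GFQ} on every instance $I$, which then transfers the fractional competitive ratio verbatim to the integral algorithm. The set-aside phase is pathwise identical in both executions (the first $m_j$ agents of each class $j$ are always accepted), so my plan is to focus exclusively on the rounding phase and show that $\mathbb{E}[x_t] = \tilde{x}_t$ for every agent $t$ processed through \SLR. By linearity of expectation this gives $\mathbb{E}[\sum_t v_t x_t] = \sum_t v_t \tilde{x}_t$, matching the fractional objective exactly.

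The technical heart is a coupling invariant that I would establish by induction on the number of agents processed by \SLR. Writing $Y_t := \sum_{\tau \leq t} x_\tau$ and $z_t := \sum_{\tau \leq t} \tilde{x}_\tau$ (restricted to \SLR-processed agents), the invariant reads: $Y_t \in \{\lfloor z_t \rfloor, \lceil z_t \rceil\}$ almost surely, and $\Pr[Y_t = \lceil z_t \rceil] = z_t - \lfloor z_t \rfloor$. Its consequences are exactly what is needed: $\mathbb{E}[Y_t] = z_t$, so $\mathbb{E}[x_t] = \mathbb{E}[Y_t - Y_{t-1}] = \tilde{x}_t$ by telescoping; and the almost-sure bound $Y_t \leq \lceil z_t \rceil$ combined with $z_T \leq B - M$ for \textsc{Frac-GFQ} yields $Y_T + M \leq B$, confirming that the inventory constraint is respected pathwise (the \GFQ constraints are enforced directly by the set-aside mechanism).

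For the inductive step, I would use the observation that $\tilde{x}_{t+1} \in [0,1]$ forces $\lceil z_{t+1}\rceil - \lceil z_t \rceil \in \{0,1\}$, corresponding to the two outer branches of \SLR. The state variable $\kappa_{t+1}$ carried by Algorithm~\ref{alg_GFQ_Randomized} is exactly $Y_t + 1$, so the inner case split on whether $\kappa \in \{\lceil z_p \rceil, \lceil z_n \rceil\}$ is precisely a case split on the realized value of $Y_t$ under the inductive hypothesis. In the same-item case, \SLR accepts with conditional probability $\tilde{x}_{t+1}/(\lceil z_t\rceil - z_t)$ when $Y_t = \lfloor z_t \rfloor$ and does nothing when $Y_t = \lceil z_t \rceil$. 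In the boundary-crossing case, the rule forces acceptance with probability one when $Y_t = \lfloor z_t \rfloor$ (closing out the current unit) and accepts the next unit with probability $(z_{t+1} - \lceil z_t \rceil)/(1 - \lceil z_t \rceil + z_t)$ when $Y_t = \lceil z_t \rceil$. In each case, multiplying the conditional probabilities by the inductive marginals causes the algorithm's denominators to cancel, and the target marginal $\Pr[Y_{t+1} = \lceil z_{t+1}\rceil] = z_{t+1} - \lfloor z_{t+1}\rfloor$ falls out after a short algebraic simplification.

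I expect the main obstacle to be precisely this cancellation argument in the boundary-crossing case: one must verify that the denominator $1 - \lceil z_p \rceil + z_p$ appearing in \SLR agrees exactly with the inductive conditional probability $\Pr[Y_t = \lceil z_t \rceil] = z_t - \lfloor z_t \rfloor$, so that when the conditional acceptance probability is unconditioned via the inductive hypothesis, the $(1 + z_t - \lceil z_t \rceil)$ factors telescope away. Once this normalization check is performed and the two case computations are carried out, the induction closes and Theorem~\ref{theorem-gfq-randomized} follows immediately: the expected integral utility equals the fractional utility on every instance, so the two algorithms share the same competitive ratio against $\OPT$.
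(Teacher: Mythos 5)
Your proposal is correct and follows essentially the same route as the paper's proof: an induction over the rounding steps showing that the allocation count stays supported on $\{\lfloor z_t\rfloor,\lceil z_t\rceil\}$ with $\Pr[Y_t=\lceil z_t\rceil]=z_t-\lfloor z_t\rfloor$ (the paper states this as the availability probability $\lceil z_t\rceil - z_t$ of item $\lceil z_t\rceil$), followed by the same two-case analysis in which the denominators of \SLR cancel against the inductive marginals to give $\mathbb{E}[x_t]=\tilde{x}_t$. Your framing via the explicit two-point invariant and telescoping is a slightly cleaner packaging of the identical argument, and your added feasibility check $Y_T+M\le B$ is a welcome detail the paper leaves implicit.
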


\begin{wrapfigure}{r}{0.45\textwidth}
  \centering
  \includegraphics[trim=0.2cm 0cm 0.2cm 0.2cm,clip,width=0.45\textwidth]{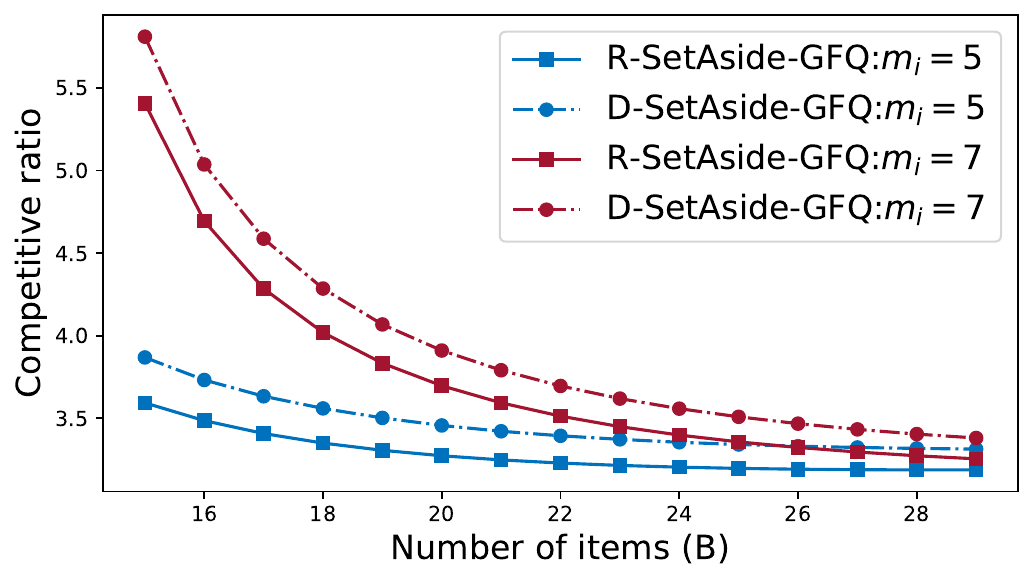}
  \caption{Comparison of \dsag and \rsag.}
  \label{fig:wrapfig}
\end{wrapfigure}
The proof of this theorem is presented in Appendix \ref{appendix-theorem-gfq-randomized}. Note that when there is only 1 class and $m_1=0$, \OMCS is reduced to the conventional online selection problem~\cite{lorenz2009optimal}, and Algorithm \ref{alg_GFQ_Randomized} achieves a tight competitive ratio $1+\ln\theta_1$, which matches the lower bound \cite{zhou2008budget, chakrabarty2008online}. To the best of our knowledge, Algorithm \ref{alg_GFQ_Randomized} is the first randomized algorithm that can attain this result. Figure \ref{fig:wrapfig} shows the comparative ratio of \rsag and \dsag based on the number of available items (i.e., $ B $). It illustrates that while these two ratios essentially converge for large values of $B$, \rsag significantly outperforms \dsag in cases with smaller inventory sizes. Furthermore, we believe that this rounding scheme can be adaptable to a wider range of related online selection problems, such as the single-leg revenue problem~\cite{Balseiro2022SingleLegRM}, to achieve a tight guarantee. In the subsequent sections, we explore how this rounding scheme can be extended to address these additional problem settings, demonstrating its flexibility and effectiveness.

\section{\OMCS with $\beta$-Proportional Fairness}\label{section-Proportional-Fairness}
Despite its simplicity and intuitive appeal, the \OMCS problem with \GFQ guarantees exhibits several notable limitations. The reservation vector $\mathbf{m}$ is enforced as a hard constraint, yet determining appropriate values for $\mathbf{m}$ is often non-trivial and may be contentious in practice---e.g., whether to reserve an equal $1/K$ fraction of the total budget for each class or to allocate reservations proportionally based on class sizes. Furthermore, as the algorithm must maintain feasibility without knowledge of future arrivals, early agents may receive disproportionately favorable allocations despite having low valuations, resulting in individual-level unfairness. To mitigate these challenges, we introduce a utility-based fairness notion that relaxes the rigidity of \GFQ. Specifically, we propose Algorithm~\ref{alg-proportional-fairness}, which employs a relax-and-round framework \cite{ma2024randomized}. In the following section, we first describe the relaxation phase (lines \ref{relaxation_begin}--\ref{relaxation_end}), which ensures $\beta$-proportional fairness under the multi-labeled setting. We then present a lossless rounding procedure (lines \ref{rounding_begin}--\ref{rounding_end}) that converts the fractional solution into a feasible integral allocation while preserving the fairness guarantees.

\begin{algorithm}[t]
    \SetKwInput{KwInput}{Input}
    \SetKwInput{KwInitialization}{Initialize}
    \SetKwComment{Com}{\textcolor{gray}{$\triangleright$} }{}
    \KwInput{$ B, \{\theta_{j}\}_{\forall j \in [K] }$.}
    \KwInitialization{Sale unit index $\kappa_1 = 1$, utilization levels $\{z^{i,j}_0=0\}_{\forall i,j \in [K]}$, $z^G_0=0$ and $z_0=0$.}
    
    \While{agent $ t$ arrives}{
    Obtain the value and class information of agent $ t $: $ v_t $ and $\mathcal{J}_t$ \;

    \For(\Com*[f]{\textcolor{gray}{Relaxation phase.}} \label{relaxation_begin}){all $i,j \in \mathcal{J}_t$}{
    \If{$v_t \geq \phi_{i,j}(z^{i,j}_{t-1})$}{
      $
        \hat{x}^{i,j}_t = \argmax_{a\in[0,1]} \{a v_t-\int_{z^{i,j}_{t-1}}^{z^{i,j}_{t-1}+a} \phi_{i,j}(\eta)d\eta \}.
        $
        
      }\label{pseudo-maximization-1}} 
    $\tilde{x}^{i,j}_t = \min\{[h - z^{i,j}_{t-1}]^+, \hat{x}^{i,j}_t\}$ and $h$ is the maximum number such that $\sum_{i,j \in \mathcal{J}_t} \tilde{x}^{i,j}_t \le 1$.
    
    \If{$v_t \geq \phi^G(u_{t-1})$}{
      $\tilde{x}_t^G = \argmax_{a \in [0, 1-\sum_{i,j \in \mathcal{J}_t} \tilde{x}_t^{i,j}]} \left\{a\cdot v_t-\int_{z^G_{t-1}}^{z^G_{t-1}+a} \phi^G(\eta)d\eta \right\}$.\label{pseudo-maximization-2}
      }

    Update $ z^{i,j}_t = z^{i,j}_{t-1} + x^{i,j}_t $ for all $i,j\in \mathcal{J}_t$. 
    
    Update $z^G_t = z^G_{t-1} + x^G_t $. \label{relaxation_end}

    Set $\tilde{x}_t = \sum_{i,j\in\mathcal{J}_t}\tilde{x}^{i,j}_t+ \tilde{x}^G_t$ and $z_t = z_{t-1}+\tilde{x}_t$ {\Com*[f]{\textcolor{gray}{Rounding phase.}}} \label{rounding_begin}

    $x_t =\SLR(\kappa_t, z_t, z_{t-1}, \tilde{x}_t)$;\label{rounding-1}
    
    Update $\kappa_{t+1} = \kappa_t + x_t$. \label{rounding_end}
            }
    \caption{Randomized Set-Aside with $\beta$-\PF guarantee (\rsap)}
    \label{alg-proportional-fairness}
\end{algorithm}

\subsection{Relaxation Phase (lines \ref{relaxation_begin}-\ref{relaxation_end}): A Novel Fractional Set-Aside Algorithm} 
Here we first focus on the relaxation phase of Algorithm \ref{alg-proportional-fairness} where the allocation decisions can take fractional values. At a high level, this phase works as follows. Upon arrival of each agent, based on its class set and valuation information, $|\mathcal{J}_t|+\binom{|\mathcal{J}_t|}{2} + 1$  allocation decisions are made. The first $|\mathcal{J}_t|$ allocations are based on the agent’s group-specific threshold function and the next $\binom{|\mathcal{J}_t|}{2}$ are based on the threshold functions designed for each pair of groups which can successfully ensures group fairness in the multi-labeled setting. The last one is based on a global threshold function, aimed at optimizing individual welfare. Specifically, we design $K + 1+\binom{K}{2}$ threshold functions, one local threshold function for each class $j\in[K]$ denoted by $\phi_{j,j}(u): [0, b_j] \to [1, \theta_j]$, one for each pair $i,j\in[K]$ denoted by $\phi_{i,j}(u): [0, b_{ij}] \to [1, \min\{\theta_i, \theta_j\}]$, and one global threshold function, denoted by $\phi^G(u): [0, B\cdot\mathfrak{b}] \to [1, \theta_K]$, where $\mathfrak{b}\in[0, 1]$ is a parameter indicating the importance of efficiency over fairness. In the following theorem we show that with a well-designed set of threshold functions,  Algorithm \ref{alg-proportional-fairness} can smoothly balance efficiency and fairness.
\begin{theorem}\label{theorem-proportional-fairness-overlapping}
    For any given $\mathfrak{b}\in[0,1]$, the relaxation phase of Algorithm \ref{alg-proportional-fairness} is $\alpha(\mathfrak{b})$-competitive (in utility maximization) and $\beta(\mathfrak{b})$-\PF, where
    \begin{align}\label{eq:alpha-beta-proportional-fairness}
    \alpha(\mathfrak{b}) &= \frac{1}{\frac{1 -\mathfrak{b}}{\sum_{j\in[K]}(K-j+1)\cdot\alpha_j} + \frac{\mathfrak{b}}{\alpha_K}}, \quad \quad \beta(\mathfrak{b}) = \frac{1}{1 - \mathfrak{b}} \cdot \sum_{j\in[K]}\Big(1-\frac{j-1}{K} \Big)\cdot\alpha_j,
    \end{align}
    provided that for all $i\geq j \in[K]$, the threshold functions are designed as follows:
    \begin{align*}
        \phi_{j,i}(u) = 
        \begin{cases}
            1 & u \in \left[0, \frac{b_j}{\alpha_j}\right],\\
            \exp\left(\frac{K \cdot \bar{\beta}(\mathfrak{b})\cdot u}{B} - 1\right) & u \in \left[\frac{b_j}{\alpha_j}, b_j\right],
        \end{cases}~
        \phi^G(u) = \begin{cases}
            1& u\in \big[0, \frac{B\cdot\mathfrak{b}}{\alpha_K}\big],\\
            \exp\bigg({ \frac{\alpha_K\cdot u}{B\cdot\mathfrak{b}} - 1}\bigg) & u\in \big[\frac{B\cdot\mathfrak{b}}{\alpha_K}, B\cdot\mathfrak{b}\big],
        \end{cases}
    \end{align*}
    where $b_j = \frac{B\cdot \alpha_j\cdot(1-\mathfrak{b})}{\sum_{i\in[K]} (K-i+1)\cdot\alpha_i} \ \text{with}\ \alpha_i = 1 + \ln\theta_i$.
\end{theorem}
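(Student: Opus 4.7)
The plan is to extend the classical exponential-threshold ``pseudo-revenue'' analysis used for online search and online knapsack~\cite{zhou2008budget,tan2023threshold} to the multi-budget, multi-class, multi-label setting of Algorithm~\ref{alg-proportional-fairness}. First I would set up the pseudo-utilities
\[
\Delta^{i,j}_T := \int_0^{z^{i,j}_T}\phi_{i,j}(\eta)\,d\eta, \qquad \Delta^{G}_T := \int_0^{z^{G}_T}\phi^{G}(\eta)\,d\eta,
\]
and record two identities that follow directly from the given definitions: (i) the budgets partition the inventory, namely $\sum_{j\in[K]}(K-j+1)b_j + B\mathfrak{b}=B$, so the subproblems are jointly feasible for the rounding step; and (ii) the exponential pieces satisfy $\phi_{j,i}'(u)=\frac{\alpha_j}{b_j}\phi_{j,i}(u)$ on $[b_j/\alpha_j,b_j]$ and $(\phi^G)'(u)=\frac{\alpha_K}{B\mathfrak{b}}\phi^G(u)$, which are the defining ODEs of the optimal threshold for a single-stream $\alpha$-competitive online search. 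Together with the admission tests $v_t\ge\phi_{i,j}(z^{i,j}_{t-1})$, the pointwise maximizations in lines~\ref{pseudo-maximization-1}--\ref{pseudo-maximization-2} yield the per-stream pseudo-revenue inequality $\sum_t v_t\,\tilde{x}^{i,j}_t\ge \Delta^{i,j}_T$, and similarly for the global stream.

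For the competitive ratio $\alpha(\mathfrak{b})$, given any offline allocation $x^*$ I would charge each unit $x^*_t$ to the subproblems for which agent $t$ is eligible, distributing the charge in proportion to the subproblem budgets. Inside subproblem $(i,j)$, the ODE above together with $\phi_{i,j}(b_j)=\theta_j$ gives the single-stream bound $\OPT^{(i,j)}\le \alpha_j\,\Delta^{i,j}_T$, and analogously $\OPT^{(G)}\le \alpha_K\,\Delta^{G}_T$. Summing these bounds, weighted by the budget shares $(K-j+1)b_j$ and $B\mathfrak{b}$, and combining with the pseudo-revenue inequalities produces $\OPT\le \alpha(\mathfrak{b})\cdot\ALG$; the weighted-harmonic form of $\alpha(\mathfrak{b})$ in \eqref{eq:alpha-beta-proportional-fairness} arises naturally from the fact that the efficiency-oriented global stream operates in parallel with the $\binom{K}{2}+K$ fairness-oriented pair streams.

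For the $\beta$-\PF guarantee I would fix any alternative allocation $\mathbf{w}$ and bound $U_j(\mathbf{w})$ class by class. Every agent counted in $U_j(\mathbf{w})$ carries class $j$ in its label set, so in Algorithm~\ref{alg-proportional-fairness} it activates the $K$ pair subproblems $\{(j,i):i\ge j\}\cup\{(i,j):i<j\}$; applying the single-stream bound to each gives $U_j(\mathbf{w})\le \alpha_j\bigl(\sum_{i\ge j}\Delta^{j,i}_T+\sum_{i<j}\Delta^{i,j}_T\bigr)$, with the global-stream contribution absorbed by the $\frac{1}{1-\mathfrak{b}}$ factor. Every such $\Delta^{i,j}_T$ corresponds to a fractional allocation that counts toward $U_j(\tilde{\mathbf{x}})$, and because each pair component contributes to exactly two class utilities (its two labels), the averaging coefficient $\bigl(1-\tfrac{j-1}{K}\bigr)$ emerges when forming $\tfrac{1}{K}\sum_j U_j(\mathbf{w})/U_j(\tilde{\mathbf{x}})$. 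Collecting the coefficients and matching them with the exponent $K\bar\beta(\mathfrak{b})/B$ used to design $\phi_{j,i}$ recovers exactly $\beta(\mathfrak{b})$ stated in \eqref{eq:alpha-beta-proportional-fairness}.

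I expect the hardest step to be the combinatorial bookkeeping inside the fairness proof: because the pair threshold $\phi_{i,j}$ and its budget $b_j$ are indexed by the smaller label, an agent with a rich label set $\mathcal{J}_t$ simultaneously activates many subproblems whose pseudo-utilities are charged to different classes, and one must verify that each $U_j(\tilde{\mathbf{x}})$ receives at least its $K-j+1$ large-pair and $j-1$ small-pair contributions without double counting. A secondary technicality is the cap $h$ in line~\ref{pseudo-maximization-1}, which can truncate a pair's local allocation whenever $\sum_{i,j\in\mathcal{J}_t}\tilde{x}^{i,j}_t$ would exceed one; the analysis must show that any such truncation is compensated by reallocating the remaining slack to the global stream $\tilde{x}^G_t$, so that both the per-stream pseudo-revenue inequalities and the per-class fairness inequalities survive intact.
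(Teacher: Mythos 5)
Your overall toolkit (exponential thresholds, pseudo-utility accounting, per-stream bounds of the form $\OPT^{(i,j)}\le\alpha_j\Delta^{i,j}_T$) is the right one, and your competitive-ratio half is workable—indeed more elaborate than the paper's, which simply bounds $\OPT\le B\cdot v_{\max}$ and lower-bounds $\ALG$ by the accumulated pseudo-utilities $\sum_{i,j\in\mathcal{J}_t}\Phi_{i,j}(v_{\max})+\Phi_G(v_{\max})\ge B v_{\max}\bigl[\tfrac{1}{K\beta(\mathfrak{b})}+\tfrac{\mathfrak{b}}{\alpha_K}\bigr]$. The problem is the fairness half, where your central inequality fails. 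You claim $U_j(\mathbf{w})\le\alpha_j\bigl(\sum_{i\ge j}\Delta^{j,i}_T+\sum_{i<j}\Delta^{i,j}_T\bigr)$, but the single-stream bound only controls an optimum restricted to the pair budget $b_j$, whereas $\mathbf{w}$ may spend the full budget $B$ on class-$j$ agents. Concretely, if all arrivals are single-labeled in class $j$ with value $\theta_j$, then $U_j(\mathbf{w})=B\theta_j$ while the only activated stream gives $\Delta^{j,j}_T=\tfrac{B\theta_j}{K\beta(\mathfrak{b})}$, so your inequality would require $K\beta(\mathfrak{b})\le\alpha_j$, which is false. Rescaling by $B/b_j$ fixes the constant to $K\beta(\mathfrak{b})$, but then summing the per-class ratios $U_j(\mathbf{w})/U_j(\mathbf{x})\le K\beta(\mathfrak{b})$ over all $K$ classes and dividing by $K$ yields only $K\beta(\mathfrak{b})$, a factor $K$ too weak. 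The per-class decomposition discards exactly the coupling you need: the adversarial $\mathbf{w}$ cannot give $B$ units to every class separately because the budget $\sum_t w_t\le B$ is shared across classes.

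The paper's proof supplies the missing idea via a per-agent, not per-class, argument: it writes the $\PF$ ratio as an LP over $\mathbf{w}$ with the single constraint $\sum_t w_t\le B$ and checks that $q=\beta(\mathfrak{b})/B$ is dual feasible, i.e.\ $\tfrac{1}{K}\sum_{i\in\mathcal{J}_t}v_t/U_i(\mathbf{x})\le\beta(\mathfrak{b})/B$ for every agent $t$. The key lower bound is $U_i(\mathbf{x})\ge\sum_{j\in\mathcal{J}_t}\Phi_{i,j}(v_t)\ge\tfrac{B\,v_t\,|\mathcal{J}_t|}{K\beta(\mathfrak{b})}$, which exploits that an arriving agent activates $|\mathcal{J}_t|$ pair streams containing class $i$ (only the pairs with both labels in $\mathcal{J}_t$—your assertion that a class-$j$ agent activates all $K$ pairs involving $j$ is also incorrect for $|\mathcal{J}_t|<K$). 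The factor $|\mathcal{J}_t|$ in this lower bound exactly cancels the $|\mathcal{J}_t|$ terms in the dual constraint, which is precisely the mechanism that handles multi-labeled agents; without replacing your class-by-class charging with this per-agent dual-feasibility step (or an equivalent accounting of the shared budget), the argument does not recover $\beta(\mathfrak{b})$.
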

The proof of Theorem \ref{theorem-proportional-fairness-overlapping} is given in Appendix \ref{appendix-theorem-proportional-fairness-overlapping}. Here, the parameter $\mathfrak{b} \in [0, 1]$ quantifies the degree of emphasis placed on efficiency. Notably, as $\mathfrak{b} \rightarrow 1$, the allocation is governed solely by the global threshold function and ignore the set-aside budget to guarantee fairness, causing the algorithm to converge to the optimal competitive ratio $\alpha_K$ without fairness constraints. On the other hand, as $\mathfrak{b} \rightarrow  0$, the algorithm excludes the global threshold function entirely and split the set-aside budget among class-based threshold function, which achieves $\left(\frac{1}{K} \sum\nolimits_{i \in [K]} (K-i+1)\cdot \alpha_i\right)$-\PF.

We note that \cite{banerjee2022proportionally} studied a generalized version of this problem, where valuations are not uniform across groups at each timestep. However, our approach significantly diverges from theirs. Their algorithm reserves half of the total budget upfront to ensure fairness and greedily allocates the remainder to minimally satisfy $\beta$-proportional fairness at each step. In contrast, our method employs carefully designed threshold functions that provide a more principled and flexible mechanism for allocation. Moreover, their design makes it difficult to explore the trade-off between fairness and efficiency, whereas our approach allows for a more transparent and systematic examination of this relationship.   We further note that in the special case where $|\mathcal{J}_t| = 1$ for all $t \in [T]$--that is, when there are no multi-labeled arrivals--the need for reserving budgets for each class pair disappears. In this setting, our results recover those of \cite{zargari2025online}, which proposed an algorithm that achieves the Pareto-optimal trade-off between fairness and efficiency. The following corollary formally states this result.

\begin{corollary}[Pareto-optimality]\label{corollary-proportional-fairness-negarive-result}
    For fractional \OMCS with single-labeled agents only, Algorithm~\ref{alg-proportional-fairness} is Pareto-optimal in that for any $\mathfrak{b} \in [0, 1]$ and $ \epsilon > 0 $, no online algorithm can be $ (\alpha(\mathfrak{b})-\epsilon)$-competitive without deteriorating the fairness guarantee (i.e., increasing the value of $ \beta(\mathfrak{b})$). 
\end{corollary}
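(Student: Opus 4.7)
The plan is to reduce the statement to the Pareto-optimality result of \cite{zargari2025online} for fractional \OMCS with single-labeled agents. The first step is to verify that when $|\mathcal{J}_t|=1$ for every $t$, Algorithm~\ref{alg-proportional-fairness} degenerates into the algorithm analyzed in that prior work: the inner loop over $i,j\in\mathcal{J}_t$ contains only the single pair $(j,j)$ where $j$ is the unique class of agent $t$, so all pairwise threshold functions $\phi_{i,j}$ with $i\neq j$ are inactive and contribute no reservation. What remains is precisely a set of per-class thresholds $\phi_{j,j}$ acting on reserved budgets $b_j$ together with a global threshold $\phi^G$ acting on budget $B\cdot\mathfrak{b}$, which is the template for which \cite{zargari2025online} established a Pareto frontier.

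Next I would show that the $(\alpha(\mathfrak{b}),\beta(\mathfrak{b}))$ pair delivered by Theorem~\ref{theorem-proportional-fairness-overlapping} in the single-labeled specialization parametrizes exactly that frontier. This amounts to checking that $b_j = B\cdot\alpha_j(1-\mathfrak{b})/\sum_i (K-i+1)\alpha_i$ matches the class-wise reservation split in the prior work, that the flat-vs-exponential split of $\phi_{j,j}$ on $[0,b_j/\alpha_j]$ and $[b_j/\alpha_j,b_j]$ (and analogously for $\phi^G$) corresponds to the prior threshold design, and that the resulting closed-form expressions in \eqref{eq:alpha-beta-proportional-fairness} reproduce the trade-off curve shown there. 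Pareto-optimality then follows from the prior lower bound: since every single-labeled instance is an admissible instance of our problem, any online algorithm that is $(\alpha(\mathfrak{b})-\epsilon)$-competitive while still $\beta(\mathfrak{b})$-\PF would, when restricted to single-labeled inputs, violate the impossibility established in \cite{zargari2025online}.

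The main obstacle will be pinning down the exact parameter correspondence so that the Pareto pair from Theorem~\ref{theorem-proportional-fairness-overlapping} hits, rather than lies strictly inside, the frontier in \cite{zargari2025online}. If the parametrizations differ even by a constant factor, the corollary would not transfer directly and one would instead have to reprove the lower bound from scratch via a standard adversarial construction: for each target fairness level, build a family of sequences in which arrivals first present a long increasing-valuation block restricted to a single class (forcing any algorithm to commit reservations early to preserve $\beta$-\PF), then reveal high-valuation arrivals in the remaining classes that the algorithm can no longer serve. Matching the algorithm's acceptance profile on this family to the ODE defining $\phi_{j,j}$ and $\phi^G$ would yield an integral inequality whose tight solution coincides with $(\alpha(\mathfrak{b}),\beta(\mathfrak{b}))$. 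I would budget most of the effort for this parameter verification and treat the adversarial construction as the fallback route.
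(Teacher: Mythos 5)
Your proposal takes essentially the same route as the paper: the paper's proof simply observes that with single-labeled agents the pairwise reservation budgets become redundant (set $b_{i,j}=0$ for $i\neq j$), so the threshold design collapses to the one in \cite{zargari2025online}, whose Pareto-optimality is then inherited. Your additional care about verifying the exact parameter correspondence (and the adversarial fallback) is a reasonable elaboration, but the core argument is identical to the paper's reduction.
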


The proof of Corollary~\ref{corollary-proportional-fairness-negarive-result} is provided in Appendix~\ref{appendix-corollary-proportional-fairness-negarive-result}. While this result establishes the optimal fairness-efficiency trade-off in the single-labeled setting, the same does not hold for the algorithm in~\cite{zargari2025online} when extended to the multi-labeled case. Specifically, under multi-labeled arrivals, their method guarantees only a $\left(\frac{1}{1 - \mathfrak{b}} \cdot \sum_{j \in [K]} \alpha_j\right)$-\PF solution, which is significantly weaker than the fairness guarantee achieved by our proposed algorithm. This corollary therefore not only confirms the Pareto-optimality of our design in the single-labeled regime but also illustrates its superior performance and generalization to the multi-labeled setting in \OMCS.

\subsection{Rounding Phase (lines \ref{rounding_begin}-\ref{rounding_end}): A Lossless Online Rounding Scheme} 
In this section, we discuss the rounding phase of Algorithm \ref{alg-proportional-fairness}, which builds upon the rounding scheme described in Algorithm \ref{alg_GFQ_Second_Layer}. This algorithm leverages the concept of negative correlation among decisions to achieve the optimal $\beta$-proportional fairness guarantee. In particular, at each time $t$, Algorithm \ref{alg-proportional-fairness} first computes at most $|\mathcal{J}_t| + \binom{|\mathcal{J}_t|}{2} + 1$ provisional allocations based on the class‐specific threshold functions to enforce fairness, and using a global threshold function to promote overall efficiency. The sum of these allocations yields a total fractional allocation at time $t$, which is then rounded to an integral allocation via the rounding scheme of Algorithm~\ref{alg_GFQ_Second_Layer}. Somewhat surprisingly, the rounding scheme of Algorithm~\ref{alg_GFQ_Second_Layer}, originally developed for \OMCS with \GFQ constraints and proven to be lossless in that setting, also serves as a lossless online rounding scheme for \OMCS under the $\beta$-PF guarantee. Specifically, it preserves the performance guarantee obtained in the relaxation phase when transitioning to the integral setting. Theorem \ref{theorem-beta-proportional-fair} below highlights this in detail.

\begin{theorem}[\OMCS with $\beta$-\PF]\label{theorem-beta-proportional-fair}
    For any $\mathfrak{b}\in[0,1]$, Algorithm \ref{alg-proportional-fairness} is $\alpha(\mathfrak{b})$-competitive and $\beta(\mathfrak{b})$-\PF,  where  $\alpha(\mathfrak{b})$ and $\beta(\mathfrak{b})$ are defined in Eq. \eqref{eq:alpha-beta-proportional-fairness}.
\end{theorem}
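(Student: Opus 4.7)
The plan is to reduce Theorem~\ref{theorem-beta-proportional-fair} to two ingredients: (i) the fractional performance guarantees of the relaxation phase established in Theorem~\ref{theorem-proportional-fairness-overlapping}, and (ii) the losslessness of the rounding subroutine \SLR of Algorithm~\ref{alg_GFQ_Second_Layer}, which has already been proved in the context of Theorem~\ref{theorem-gfq-randomized}. The key observation is that \SLR treats its input $\tilde{x}_t$ as an opaque scalar: regardless of how the relaxation phase constructs $\tilde{x}_t = \sum_{i,j\in\mathcal{J}_t}\tilde{x}_t^{i,j} + \tilde{x}_t^G$ from class-pair and global threshold functions, the only data passed to the rounding step are the cumulative fractional level $z_t$, the previous level $z_{t-1}$, the current unit index $\kappa_t$, and $\tilde{x}_t$ itself. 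Hence, the per-step lossless identity derived for \GFQ can be invoked verbatim here.

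\textbf{Step 1: Per-step losslessness.} First I would show, by a case analysis on whether $\lceil z_t\rceil = \lceil z_{t-1}\rceil$ or $\lceil z_t\rceil = \lceil z_{t-1}\rceil + 1$ (the only two possibilities since $\tilde{x}_t \le 1$), that conditional on the history $\mathcal{H}_{t-1}$ and the fractional decisions $\{\tilde{x}_s\}_{s\le t}$, the random integral output $x_t$ produced by \SLR satisfies $\mathbb{E}[x_t \mid \mathcal{H}_{t-1},\tilde{x}_t]=\tilde{x}_t$. In the first case, $x_t=1$ with probability $\tilde{x}_t/(\lceil z_{t-1}\rceil - z_{t-1})$ only if $\kappa_t=\lceil z_{t-1}\rceil$, which occurs with probability $\lceil z_{t-1}\rceil - z_{t-1}$ by induction on the invariant $\Pr[\kappa_t=\lceil z_{t-1}\rceil]=\lceil z_{t-1}\rceil - z_{t-1}$. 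In the second case, a short algebraic check using the two Bernoulli probabilities in \SLR shows that $\Pr[x_t=1\mid \mathcal{H}_{t-1}]=(\lceil z_{t-1}\rceil - z_{t-1}) + (z_t - \lceil z_{t-1}\rceil)=\tilde{x}_t$, and that the invariant $\Pr[\kappa_{t+1}=\lceil z_t\rceil]=\lceil z_t\rceil - z_t$ is preserved. This step is essentially identical to the argument used for Theorem~\ref{theorem-gfq-randomized}; the main task is just to verify that the invariant carries through when $\tilde{x}_t$ aggregates multiple provisional allocations.

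\textbf{Step 2: Transferring guarantees via linearity.} Because $v_t$, $\mathcal{J}_t$, $z_t$, $z_{t-1}$, and $\tilde{x}_t$ are all $\mathcal{H}_{t-1}$- and relaxation-phase-measurable, Step~1 yields
\begin{equation*}
\mathbb{E}[v_t\, x_t\,\boldsymbol{1}_{\{j\in\mathcal{J}_t\}}] \;=\; v_t\,\tilde{x}_t\,\boldsymbol{1}_{\{j\in\mathcal{J}_t\}},\qquad \forall\, j\in[K],\ t\in[T].
\end{equation*}
Summing over $t$ gives $\mathbb{E}[U_j(\mathbf{x})]=U_j(\tilde{\mathbf{x}})$ for every class $j$, and summing over $j$ likewise gives $\mathbb{E}[\sum_t v_t x_t]=\sum_t v_t \tilde{x}_t$. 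Feasibility $\sum_t x_t\le B$ is maintained because \SLR only ever allocates the unit $\kappa_t\in\{\lceil z_{t-1}\rceil,\lceil z_t\rceil\}$, and $\lceil z_T\rceil \le B$ follows from $\sum_t \tilde{x}_t\le B$, which is guaranteed by the relaxation phase.

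\textbf{Step 3: Conclusion.} Applying the fractional competitive bound of Theorem~\ref{theorem-proportional-fairness-overlapping} gives
$\mathbb{E}\bigl[\sum_t v_t x_t\bigr]=\sum_t v_t\tilde{x}_t\ge \OPT(I)/\alpha(\mathfrak{b})$, establishing $\alpha(\mathfrak{b})$-competitiveness. For the fairness guarantee, the fractional $\beta(\mathfrak{b})$-\PF property implies that for every alternative allocation $\mathbf{w}$, $\tfrac{1}{K}\sum_j U_j(\mathbf{w})/U_j(\tilde{\mathbf{x}})\le\beta(\mathfrak{b})$, and substituting $U_j(\tilde{\mathbf{x}})=\mathbb{E}[U_j(\mathbf{x})]$ yields $\beta(\mathfrak{b})$-\PF in the randomized integral sense. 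The main obstacle I anticipate is a careful articulation of Step~1, specifically the inductive invariant on $\Pr[\kappa_t=\lceil z_{t-1}\rceil]$, which ensures that the allocation probability used by \SLR is correctly normalized; once that invariant is in place, the remainder of the proof is a direct transfer via linearity of expectation.
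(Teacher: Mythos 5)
Your proposal is correct and follows essentially the same route as the paper's proof in Appendix C.3: an inductive invariant that unit $\lceil z_{t-1}\rceil$ is available with probability $\lceil z_{t-1}\rceil - z_{t-1}$, a two-case analysis on whether $\lceil z_t\rceil = \lceil z_{t-1}\rceil$, and then linearity of expectation to equate $\mathbb{E}[U_j(\mathbf{x})]$ with the per-class fractional utilities so that Theorem~\ref{theorem-proportional-fairness-overlapping} transfers directly. The paper likewise treats $\tilde{x}_t$ as an opaque aggregate when invoking \SLR, so your reduction to the Theorem~\ref{theorem-gfq-randomized} argument is exactly what is done there.
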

The proof of this theorem is presented in Appendix \ref{appendix-theorem-beta-proportional-fair}. Since based on Corollary \ref{corollary-proportional-fairness-negarive-result}, Algorithm~\ref{alg-proportional-fairness} recovers the Pareto-optimal design of \cite{zargari2025online} in the special case of single‐labeled setting where $|\mathcal{J}_t|=1$ for all $t\in[T]$, the final integral allocation is also Pareto-optimal because the rounding phase does not deviate the performance guarantee. 

Before concluding this section, we briefly comment on the computational complexity of the proposed algorithms. All algorithms operate in an online manner and require only $\mathcal{O}(1)$ time per buyer arrival. For instance, Algorithm~\ref{alg-proportional-fairness} computes the fractional allocation $\tilde{x}_t$ by solving a convex pseudo-revenue maximization problem (lines~\ref{pseudo-maximization-1} and \ref{pseudo-maximization-2}), using predefined threshold functions for each class. The resulting fractional allocation is then rounded in $\mathcal{O}(1)$ time to yield the final decision (line~\ref{rounding-1}).

\section{Improving Group Fairness via Learning-Augmented Algorithms}\label{section-learning-augmented}
The $\beta$-proportional fairness essentially ensures that the desired allocation $\mathbf{x}$ is comparable to all other allocations $\mathbf{w}$ in the \PF sense, i.e., \( \frac{1}{K} \sum_{j\in[K]} \frac{U_j(\mathbf{w})}{U_j(\mathbf{x})} \leq \beta \). However, this can be overly pessimistic in practice. To address this, predictions from machine learning tools or advice from experts about a fair allocation are often available, and can be used to improve fairness guarantees. In this section, we aim to explore how to leverage (possibly imperfect) advice about a fair allocation based on the \textit{consistency-robustness} framework in the literature of learning-augments algorithms \cite{wei2020optimal, kumar2018improving}.

Specifically, an allocation $\mathbf{x}$ is called \textit{\bfseries $\eta$-consistent proportional fair} if it satisfies $\eta$-proportional fairness with respect to the advice allocation $\mathbf{\hat{x}}$, i.e., $\frac{1}{K}\sum_{j\in[K]}\frac{U_j(\mathbf{\hat{x}})}{U_j(\mathbf{x})} \leq \eta$. Similarly, an allocation $\mathbf{x}$ is \textit{\bfseries $\gamma$-robust proportional fair} if it satisfies $\gamma$-proportional fairness with respect to any allocation $\mathbf{w}$, $\frac{1}{K}\sum_{j\in[K]}\frac{U_j(\mathbf{w})}{U_j(\mathbf{x})} \leq \gamma$. These metrics allow us to balance the benefits of good advice with the need for robustness against advice errors, ensuring a more practical and reliable fairness guarantee. 

\begin{definition}[Advice Model of \OMCS with $\beta$-\PF]\label{definition-ADV-beta-PF} 
    For the \OMCS problem with $\beta$-\PF guarantee, we define $\ADV \coloneqq \mathbf{\hat{x}} = \{\hat{x}_t \in \{0,1\} : t \in [T]\}$ as the untrusted fair advice. 
\end{definition}

Note that the primary objective of this section is to use this advice to improve the fairness guarantee. Therefore it can not recover the 1-consistency of the efficiency even when the prediction is completely correct. Moreover, since the focus of this advice is on improving fairness, we set $\mathfrak{b} = 0$ throughout, thereby placing full emphasis on the fairness objective. We introduce a learning-augmented algorithm, termed the \textit{Linear Combination-based Learning-Augmented Algorithm} (\LinLA), for the \OMCS problem with a $\beta$-\PF guarantee. At each time step $t$, the algorithm generates two candidate decisions: a \textit{robust decision} $\bar{x}_t$, computed using Algorithm~\ref{alg-proportional-fairness}, and a \textit{predicted fair decision} $\hat{x}_t$, produced by a black-box machine learning model trained on data from \ADV. The algorithm then selects its final decision $x_t$ through a randomized mechanism that depends on the level of trust in the prediction. A hyperparameter $\epsilon$ quantifies the confidence in the prediction and serves as a control variable balancing \textit{consistency} and \textit{robustness}: as $\epsilon \to 0$, the algorithm becomes more consistent with the predictions but less robust to errors. Accordingly, a combination probability $\rho \in [0,1]$, determined by $\epsilon$, regulates the decision-maker’s reliance on the black-box advice. Specifically, $\rho$ denotes the probability of adopting the predicted decision $\hat{x}_t$, while $1-\rho$ denotes the probability of following the robust decision $\bar{x}_t$. Hence, the expected decision of the learning-augmented algorithm at each time step is $ x_t = \rho \hat{x}_t + (1-\rho)\bar{x}_t$.

For a hyperparameter $\epsilon \in [0, \beta-1]$, where $\beta = \beta(0)$ represents the proportional fairness guarantee of Algorithm \ref{alg-proportional-fairness}, $\rho$ is defined as $\rho \coloneqq \big(\frac{\beta}{1+\epsilon} - 1\big) \cdot \frac{1}{\beta-1}$. It is easy to verify that $\rho \in [0,1]$. When $\epsilon = 0$, the algorithm fully trusts the advice decisions, setting $\rho = 1$. On the other hand, when $\epsilon = \beta-1$, the algorithm reverts entirely to the robust algorithm by setting $\rho = 0$. The following theorem presents our main results regarding the consistency and robustness of \LinLA.

\begin{theorem}\label{theorem-LA-beta-PF}
    For any $\epsilon \in [0,\beta - 1]$, \LinLA for \OMCS with $\beta$-\PF guarantee is $(1+\epsilon)$-consistent proportional fair and $ \frac{(1+\epsilon)(\beta-1)}{\epsilon}$-robust proportional fair. 
\end{theorem}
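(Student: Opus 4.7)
The plan is to decompose the expected utility of $\mathbf{x}$ as a convex combination of the utilities under the advice allocation $\hat{\mathbf{x}}$ and the robust allocation $\bar{\mathbf{x}}$, and then bound the consistency and robustness expressions separately by invoking the $\beta$-\PF guarantee of $\bar{\mathbf{x}}$ from Theorem~\ref{theorem-beta-proportional-fair} (with $\mathfrak{b}=0$). Since each $x_t$ independently equals $\hat{x}_t$ with probability $\rho$ and $\bar{x}_t$ with probability $1-\rho$, and $U_j(\cdot)$ is linear, linearity of expectation gives $\mathbb{E}[U_j(\mathbf{x})] = \rho\, U_j(\hat{\mathbf{x}}) + (1-\rho)\, U_j(\bar{\mathbf{x}})$ for every $j \in [K]$; throughout I identify $U_j(\mathbf{x})$ with this expected utility.

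For robustness I would discard the advice component in the denominator and rely only on the robust allocation. For any allocation $\mathbf{w}$, using $U_j(\mathbf{x}) \geq (1-\rho) U_j(\bar{\mathbf{x}})$ followed by the $\beta$-\PF of $\bar{\mathbf{x}}$ yields $\tfrac{1}{K}\sum_j \tfrac{U_j(\mathbf{w})}{U_j(\mathbf{x})} \leq \tfrac{1}{1-\rho}\cdot\tfrac{1}{K}\sum_j \tfrac{U_j(\mathbf{w})}{U_j(\bar{\mathbf{x}})} \leq \tfrac{\beta}{1-\rho}$. A short algebraic simplification using $\rho = \bigl(\tfrac{\beta}{1+\epsilon}-1\bigr)/(\beta-1)$ gives $1-\rho = \tfrac{\beta\epsilon}{(1+\epsilon)(\beta-1)}$, so $\tfrac{\beta}{1-\rho} = \tfrac{(1+\epsilon)(\beta-1)}{\epsilon}$, which matches the claimed robustness bound.

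Consistency is the delicate part, because the naive pointwise bound $\tfrac{U_j(\hat{\mathbf{x}})}{U_j(\mathbf{x})} \leq \tfrac{1}{\rho}$ evaluates to $\tfrac{(1+\epsilon)(\beta-1)}{\beta-1-\epsilon} > 1+\epsilon$ whenever $\epsilon > 0$, so a tighter averaged argument is needed. The key step is to set $r_j \coloneqq U_j(\hat{\mathbf{x}})/U_j(\bar{\mathbf{x}})$ and rewrite $\tfrac{U_j(\hat{\mathbf{x}})}{U_j(\mathbf{x})} = \tfrac{r_j}{\rho r_j + (1-\rho)} \eqqcolon f(r_j)$. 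A direct derivative check shows $f$ is strictly increasing and concave on $r > 0$ (with $f''(r) = -\tfrac{2\rho(1-\rho)}{(\rho r + 1-\rho)^3} < 0$). Jensen's inequality then gives $\tfrac{1}{K}\sum_j f(r_j) \leq f\bigl(\tfrac{1}{K}\sum_j r_j\bigr)$, and the $\beta$-\PF guarantee on $\bar{\mathbf{x}}$ (applied with $\mathbf{w}=\hat{\mathbf{x}}$) yields $\tfrac{1}{K}\sum_j r_j \leq \beta$. Combined with the monotonicity of $f$, the consistency is therefore bounded by $f(\beta) = \tfrac{\beta}{\rho\beta + (1-\rho)}$. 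Substituting for $\rho$ collapses the denominator to $\beta/(1+\epsilon)$, so $f(\beta) = 1+\epsilon$, as claimed.

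The hardest step will be recognizing that the per-class bound of $1/\rho$ is too loose and that one must leverage the \emph{averaged} structure of $\beta$-\PF via the concavity of $f$, so that the averaging on the denominator side transfers to the consistency expression through Jensen. Degenerate cases where $U_j(\bar{\mathbf{x}})$ or $U_j(\hat{\mathbf{x}})$ vanish are handled using the convention stated in the footnote of Definition~\ref{def_proportional_fairness}, after which the argument extends by a standard limiting procedure.
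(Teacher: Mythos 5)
Your proof is correct and follows the paper's overall architecture: write $U_j(\mathbf{x})=\rho\,U_j(\hat{\mathbf{x}})+(1-\rho)\,U_j(\bar{\mathbf{x}})$ by linearity, bound robustness by falling back on the $\beta$-\PF guarantee of $\bar{\mathbf{x}}$, and bound consistency by an averaged (Jensen-type) argument followed by the algebraic identity $\rho\beta+(1-\rho)=\beta/(1+\epsilon)$. The differences are in execution, and they favor your version. For robustness you simply drop the advice term, $U_j(\mathbf{x})\ge(1-\rho)U_j(\bar{\mathbf{x}})$, and get $\beta/(1-\rho)$ in one line; the paper reaches the same bound through a second Jensen step combined with an HM--AM inequality, which is more machinery for the same conclusion. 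For consistency, the paper sets $p_j=\rho+(1-\rho)U_j(\bar{\mathbf{x}})/U_j(\hat{\mathbf{x}})$ and asserts $\frac{1}{K}\sum_j \frac{1}{p_j}\le \frac{1}{\frac{1}{K}\sum_j p_j}$ ``by Jensen''; since $p\mapsto 1/p$ is convex, Jensen actually gives the reverse inequality (the claim as written amounts to $\mathrm{AM}\le\mathrm{HM}$), so the paper's chain is not valid as stated. Your reformulation in terms of $r_j=U_j(\hat{\mathbf{x}})/U_j(\bar{\mathbf{x}})$ and the \emph{concave increasing} map $f(r)=r/(\rho r+1-\rho)$ applies Jensen in the correct direction, uses the $\beta$-\PF bound $\frac{1}{K}\sum_j r_j\le\beta$ directly (no HM--AM detour), and arrives at $f(\beta)=1+\epsilon$. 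In short: same theorem, same decomposition, but your consistency step is the sound version of the argument and your robustness step is the more elementary one; your observation that the pointwise bound $1/\rho$ is strictly worse than $1+\epsilon$ correctly identifies why the averaging is indispensable.
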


The proof of the above theorem is provided in Appendix \ref{appendix-theorem-LA-beta-PF}. It is observed that as $\epsilon \to 0$, the algorithm becomes 1-consistent, but its robustness is unbounded. This behavior is intuitive: when the algorithm fully relies on the untrusted advice, there is a risk that the advice is inaccurate, leading to an unbounded fairness guarantee. 

Furthermore, we can see that in the single-labeled arrival setting, Theorem \ref{theorem-LA-beta-PF} implies Pareto optimality of consistency-robustness trade-off. This is details in Appendix \ref{pareto-LA-single-label}. From our analysis in this special case, \LinLA reserves $b_j = \frac{B}{K \cdot \eta} + \frac{B \cdot \ln \theta_j}{K \cdot \gamma}$ for each class \( j \in [K] \), where \( \eta \) and \( \gamma \) represent the consistency and robustness parameters. Notably, when the advice is accurate (i.e., $ \epsilon \to 0 $), each class receives exactly \( \frac{1}{K} \) of the resource. This matches the optimal reservation achieved by \NSW, as presented in \cite{banerjee2022proportionally}, since 1-\PF is equivalent to 1-\NSW.

The following corollary summarizes the main results regarding the efficiency of this model:

\begin{corollary} \label{corollary-beta-PF-LA-alpha}
    For any $\epsilon \in [0, \beta - 1]$, \LinLA for \OMCS with $\beta$-\PF guarantee is $(K \cdot (1 + \epsilon))$-consistent competitive and $ \frac{K \cdot (1 + \epsilon)(\beta - 1)}{\epsilon} $-robust competitive. 
\end{corollary}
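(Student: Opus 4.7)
The plan is to translate the two proportional-fairness guarantees of Theorem \ref{theorem-LA-beta-PF} into bounds on the expected utility ratio $\OPT/\mathbb{E}[\ALG]$. Concretely, \LinLA produces an allocation $\mathbf{x}$ with $\frac{1}{K}\sum_{j\in[K]} U_j(\mathbf{\hat{x}})/U_j(\mathbf{x}) \le 1+\epsilon$ (consistency) and $\frac{1}{K}\sum_{j\in[K]} U_j(\mathbf{w})/U_j(\mathbf{x}) \le \frac{(1+\epsilon)(\beta-1)}{\epsilon}$ for every allocation $\mathbf{w}$ (robustness). The robust bound, specialized at an offline optimal allocation $\mathbf{w}^\star$ satisfying $\sum_t v_t w^\star_t = \OPT$, is exactly what will drive the robust competitive ratio; the consistent bound is used analogously with $\mathbf{w}=\mathbf{\hat{x}}$ playing the role of the reference allocation.

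The key step is a one-line manipulation: since every summand $U_j(\mathbf{w})/U_j(\mathbf{x})$ on the left-hand side of either \PF inequality is non-negative, retaining only a single index yields the per-class bound $U_j(\mathbf{w}) \le K\eta \cdot U_j(\mathbf{x})$ for each $j\in[K]$, where $\eta \in \{1+\epsilon,\, \tfrac{(1+\epsilon)(\beta-1)}{\epsilon}\}$. Summing these $K$ inequalities over $j$ gives $\sum_{j\in[K]} U_j(\mathbf{w}) \le K\eta \sum_{j\in[K]} U_j(\mathbf{x})$. In the single-labeled regime---which is precisely the setting in which \LinLA and its set-aside quotas $b_j$ are developed---the identity $\sum_{j\in[K]} U_j(\mathbf{y}) = \sum_t v_t y_t$ holds exactly for every allocation $\mathbf{y}$, so the previous inequality collapses to $\sum_t v_t w_t \le K\eta \cdot \sum_t v_t x_t$. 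Taking expectations over \LinLA's internal randomness and plugging in $\mathbf{w}=\mathbf{\hat{x}}$ and $\mathbf{w}=\mathbf{w}^\star$ respectively yields the advertised $K(1+\epsilon)$-consistent and $\frac{K(1+\epsilon)(\beta-1)}{\epsilon}$-robust competitive ratios.

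The main subtlety worth checking is the behavior at the endpoints of $\epsilon$: as $\epsilon \to 0$ the robust competitive ratio diverges in exact parallel with the robust \PF bound, correctly reflecting that fully trusting the advice forfeits worst-case efficiency; at $\epsilon = \beta - 1$ both bounds collapse to $K\beta$, matching what one would obtain by applying the same PF-to-competitive-ratio translation directly to the underlying $\beta$-\PF guarantee of Algorithm \ref{alg-proportional-fairness}. No machinery beyond Theorem \ref{theorem-LA-beta-PF} and this elementary per-class summation is required; the only obstacle to anticipate is bookkeeping---keeping the single-labeled reduction $\sum_j U_j(\mathbf{y}) = \sum_t v_t y_t$ explicit, so that the factor of $K$ in the corollary's statement is the tight multiplier inherited from stripping the averaging $1/K$ out of the \PF definition, rather than an inflated $K^2$ arising from the looser multi-labeled identity $\sum_j U_j(\mathbf{y}) = \sum_t v_t y_t|\mathcal{J}_t|$.
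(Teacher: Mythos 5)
Your argument is correct, and it coincides with the one-line justification the paper gives in the main text immediately after the corollary (the mediant-type bound $\frac{\sum_j U_j(\mathbf{w})}{\sum_j U_j(\mathbf{x})}\le\sum_j\frac{U_j(\mathbf{w})}{U_j(\mathbf{x})}\le K\beta$); your variant of dropping all but one summand to get $U_j(\mathbf{w})\le K\eta\,U_j(\mathbf{x})$ per class and then re-summing is the same elementary manipulation and yields the same constant $K\eta$. However, the paper's \emph{official} proof in the appendix takes a genuinely different route: it invokes the reservation amounts $b_j=\frac{B}{K\eta}+\frac{B\ln\theta_j}{K\gamma}$ forced on any $\eta$-consistent, $\gamma$-robust proportionally fair algorithm (from the Pareto-optimality analysis of Theorem~\ref{theorem-LA-beta-PF}), evaluates single-class hard instances, and reads off the factor $K$ from the utilization functions $g_j$. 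That instance-based argument is tied to the specific threshold construction but additionally establishes \emph{necessity} of the factor $K$ (one cannot be better than $K\eta$-consistent or $K\gamma$-robust competitive given those fairness guarantees), whereas your argument uses Theorem~\ref{theorem-LA-beta-PF} purely as a black box and proves only the achievability direction --- which is all the corollary as stated requires, and is arguably the cleaner derivation. One point in your favor worth keeping: you are more careful than the paper about the multi-labeled case, where $\sum_j U_j(\mathbf{y})=\sum_t v_t y_t|\mathcal{J}_t|\neq\sum_t v_t y_t$; the paper's in-text justification silently identifies $\sum_j U_j$ with total utility, so your explicit restriction to the single-labeled identity (and your warning about the spurious $K^2$ otherwise) flags a gap in generality that the paper itself does not address.
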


The above corollary holds since any $\beta$-\PF allocation $\mathbf{x}$ is guaranteed to be $(K \beta)$-competitive, as for any allocation $\mathbf{w}$, we can see that $\frac{\sum_{j\in[K]} U_j(\mathbf{w})}{\sum_{j\in[K]} U_j(\mathbf{x})} \leq \sum_{j\in[K]} \frac{U_j(\mathbf{w})}{U_j(\mathbf{x})} \leq K \beta$. Intuitively, achieving 1-consistency in fairness necessitates reserving $\frac{1}{K}$ of the resources for each class. This leads to a reduction in efficiency by a factor of $K$ when all arrivals belong to a single class, as the efficient algorithm would allocate the entire resource to that class. Consequently, even with perfectly accurate predictions, the algorithm ensures $K$-consistent efficiency. This trade-off arises because the algorithm prioritizes fairness over efficiency in its design. We also demonstrated in Appendix \ref{appendix-theorem-LA-GFQ} that \LinLA can also enhance the performance guarantee of the \OMCS problem with \GFQ guarantee. The key distinction lies in the advice model, which is assumed to always satisfy \GFQ requirement, while the prediction focuses on improving efficiency.

\section{Conclusion and Future Work}\label{section-Conclusions}
In this paper, we studied group fairness guarantees in the online multi-class selection problem under a multi-labeled agent setting. We proposed a novel randomized algorithm based on a relax-and-round framework, where a carefully designed rounding step ensures that the integral solution matches the performance of the fractional one in expectation---addressing a key limitation of existing methods. Our algorithm is specifically designed to handle the complexities introduced by multi-labeled agents, enabling fair and efficient allocation across overlapping classes. To further improve performance beyond worst-case guarantees, we also developed a learning-augmented variant that incorporates untrusted predictions to enhance average-case outcomes.

Our work opens several intriguing directions for future research. A key question is how our results extend to alternative arrival models, such as the random order model or stochastic i.i.d. settings, and what implications these extensions may have for online fair allocation in broader contexts such as mechanism design, auctions, and multi-agent systems. Another important direction is to investigate whether our findings can be generalized to multi-resource settings (e.g., combinatorial auctions), in both fractional and integral forms.

\section*{Acknowledgments}
Xiaoqi Tan acknowledges support from Alberta Machine Intelligence Institute (Amii), Alberta Major Innovation Fund, and NSERC Discovery Grant RGPIN-2022-03646.

\bibliographystyle{alpha}
\bibliography{reference}

\newpage
\appendix
\onecolumn

\section{Numerical Experiments}\label{section-Numerical-Experiments}
In this section, we evaluate our algorithms to investigate the empirical fairness and efficiency under different fairness notions and algorithms. We also explore how untrusted black-box advice can help the improvement of fairness. 

\subsection{Experimental Setup}

We evaluate our theoretical results using the Google Cluster Data \cite{GoogleClusterData2015}, which records CPU usage over time for three request types and here we consider them as distinct classes. The normalized CPU allocations are scaled to fit our model, and requests needing more than one CPU units are split into single-unit requests. We assign valuations randomly with $\theta_1 = 5$, $\theta_2 = 10$, and $\theta_3 = 15$, with $B = 100$. We analyze \OMCS with \GFQ under deterministic (\dsag) and randomized (\rsag) algorithms, denoted as \dd-\GFQ and \rr-\GFQ respectively. For our analysis, we set  $m_j = 5$ for each class $j\in[K]$. We also study \OMCS with $\beta$-proportional fairness denoted as $\beta$-\PF (\rsap with $\mathfrak{b}=1$) and without fairness consideration denoted as $\alpha$-\CR (\rsap with $\mathfrak{b}=1$). To model the advice, we follow \cite{10.5555/3692070.3693120} approach. Let $\xi \in [0, 1]$ represents an \textit{adversarial probability}. When $\xi = 0$, \ADV provides the optimal solution, and when $\xi = 1$, \ADV is fully adversarial. Formally, let \(\{x^*_t : t \in [T]\}\) denote the optimal decisions and \(\{\check{x}_t : t \in [T]\}\) the decisions that minimize the objective. Then, in expectation, the advised decisions are given by $\ADV = \{(1 - \xi) x^*_t + \xi \check{x}_t : t \in [T]\}$. Under this advice model, we examine \LinLA with \GFQ (\GFQ-\LA) and $\beta$-proportional fairness ($\beta$-\PF-\LA) as well.

\begin{figure}[htb]
    \centering
    \begin{subfigure}[b]{0.49 \textwidth}
        \centering
        \includegraphics[width=\textwidth]{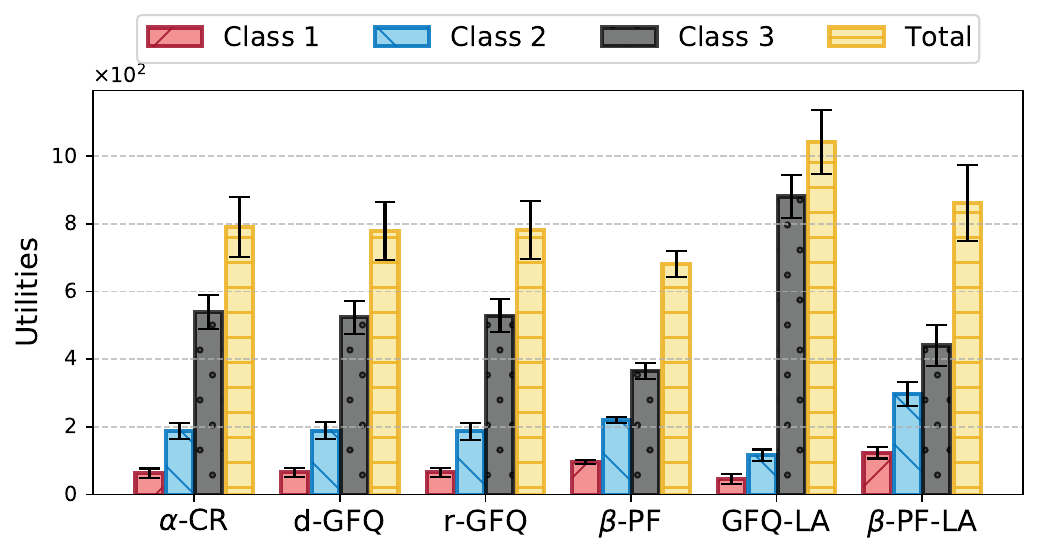} 
        \label{figure-1}
    \end{subfigure}
    \begin{subfigure}[b]{0.49\textwidth}
        \centering
        \includegraphics[width=\textwidth]{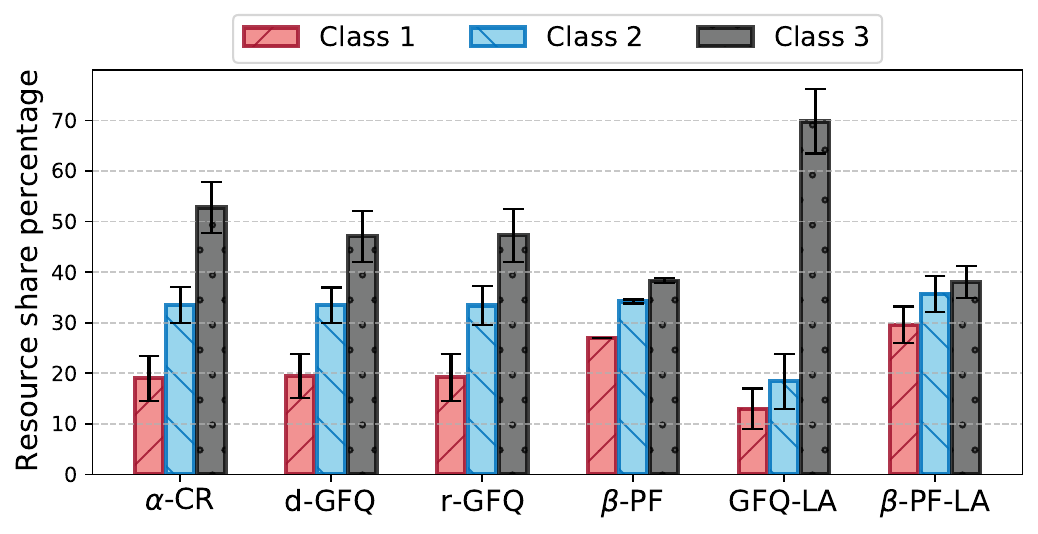} 
        \label{figure-2}
    \end{subfigure}
    \caption{Utilities and resource allocations of each class under different algorithms; $\theta_1=5$, $\theta_2 = 10$ and $\theta_3 = 15$.}
    \label{figure-alpha-beta-empirical-trade-off}
\end{figure}

\begin{minipage}[t]{0.48\textwidth}
    \centering
    \includegraphics[width=\textwidth]{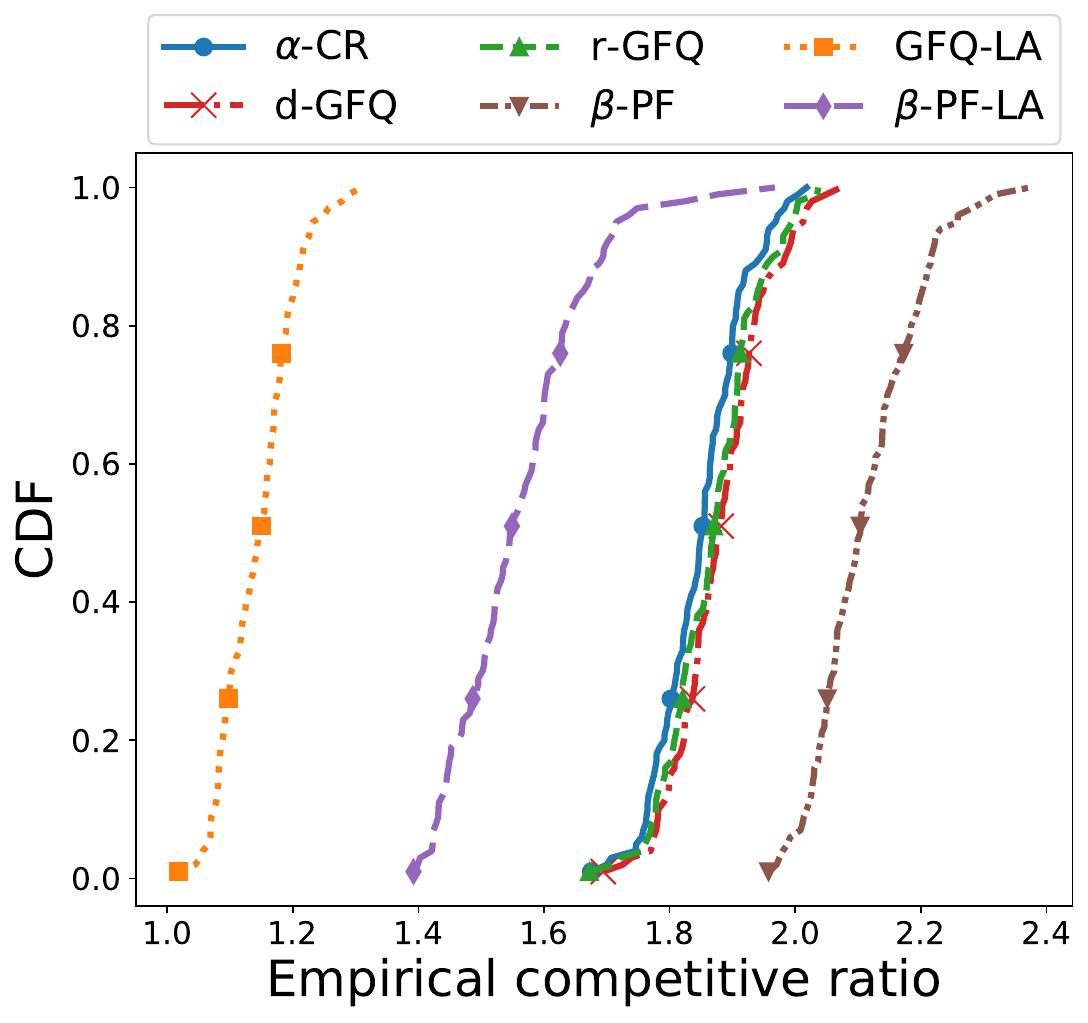}
    \captionof{figure}{CDF of empirical competitive ratios of different algorithms.}
    \label{fig:ECR-CDF}
  \end{minipage}\hfill
  \begin{minipage}[t]{0.48\textwidth}
    \centering
    \includegraphics[width=\textwidth]{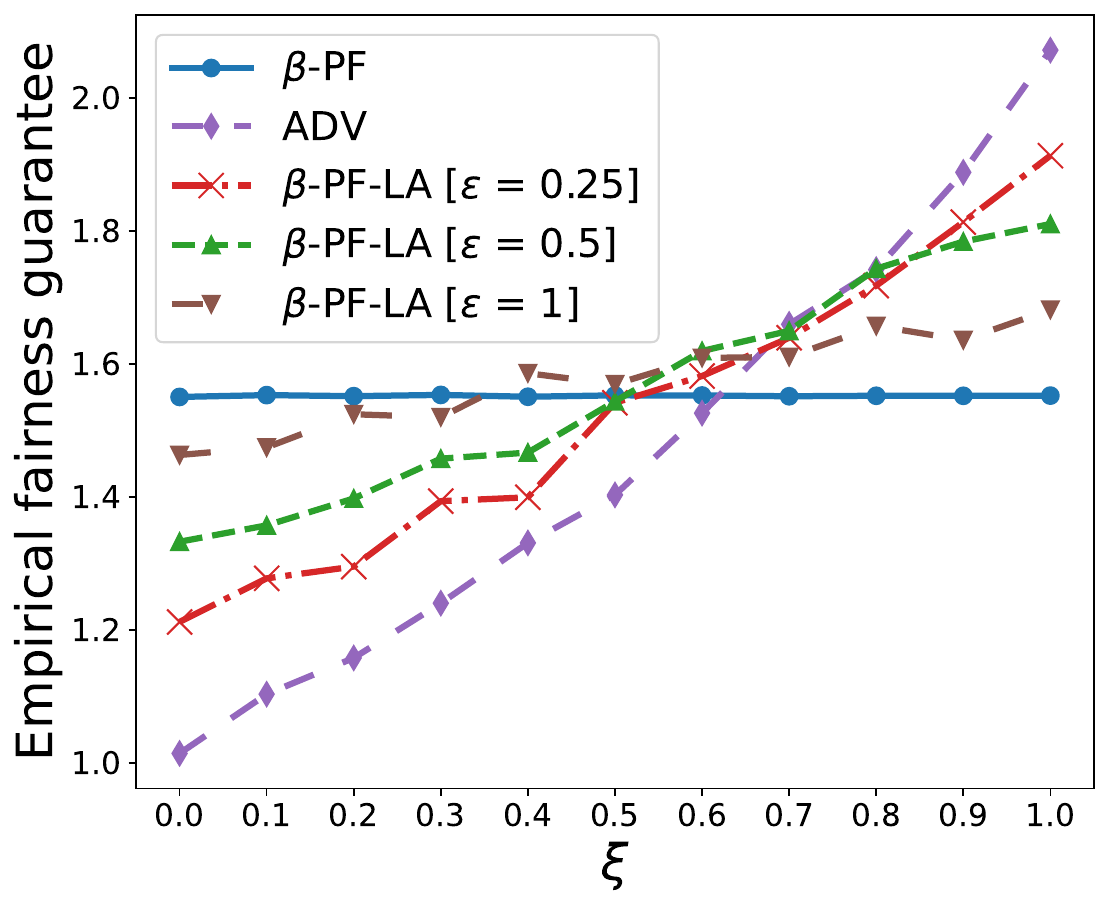}
    \captionof{figure}{Variation of the empirical fairness guarantee w.r.t. the adversarial probability $\xi$.}
    \label{fig:EFG-xi}
\end{minipage}

\subsection{Experimental Results}

In our first experiment, we set $\xi = 0$ and $\epsilon = 1.25$. Figure \ref{figure-alpha-beta-empirical-trade-off} illustrates the utilities achieved by each class and the corresponding resource allocations. As observed, the $\beta$-\PF algorithms allocate resources more equitably across classes while maintaining a high level of efficiency in terms of total utility. In contrast, quantity-based approaches prioritize efficiency over fairness, resulting in higher total utility but less equitable allocation. Furthermore, the superiority of the \LinLA algorithms is evident in this figure when compared to those without predictions. Furthermore, Figure \ref{fig:ECR-CDF} presents the cumulative distribution function (CDF) of the empirical competitive ratio. This figure demonstrates how predictions enhance performance by improving the convergence of the competitive ratio. Additionally, it highlights the superiority of \rr-\GFQ compared to \dd-\GFQ.

In another experiment, we investigate the impact of prediction accuracy and the reliance of algorithms on these predictions on the empirical $\beta$-\PF guarantee. We consider $\beta$-\PF-\LA with different values of $\epsilon$, corresponding to varying combination probabilities $\rho$, along with the algorithm without predictions and the algorithm fully relying on predictions (\ADV). Figure \ref{fig:EFG-xi} illustrates that greater reliance on predictions improves the fairness guarantee when the advice is of high quality but worsens it as the prediction quality decreases. This highlights the trade-off between consistency and robustness.

\section{Section \ref{section-GFQ} Proofs}\label{appendix-GFQ}
\subsection{Proof of Theorem \ref{theorem-GFQ-deterministic}}\label{app:gfq-determinstic}
We introduce a deterministic threshold-based online algorithm, \dsag, that is optimal among all deterministic algorithms. \dsag relies on a set of thresholds, denoted as $\boldsymbol{\lambda} = \{\lambda_i\}_{i \in [B-M]}$, which are used to decide whether to accept or reject an arriving item. The objective is to design these thresholds in a way that minimizes the competitive ratio of the algorithm. 

\begin{algorithm}[H]
    \SetKwInput{KwInput}{Input}
    \SetKwInput{KwInitialization}{Initialize}    
	\KwInput{$B$; $ \{m_j\}_{\forall j\in [K]} $; $\{\lambda^*_i\}_{\forall i \in [B-M]}$.}
    \KwInitialization{Unit index $\kappa_1 = 1$, $ \{\kappa^{j}_1=1\}_{\forall j\in [K]} $.}
        \While{buyer $t$ arrives}{
            Obtain agent $t$'s information $(v_t, \mathcal{J}_t)$
            
            \eIf{$\kappa_{t}^{j}\leq m_{j}$ for any $j\in \mathcal{J}_t$}{$x_t = 1$
            
            Update $\kappa^{j}_{t+1} = \kappa^{j}_t + x_t, $ $\forall$ $j\in\mathcal{J}_t$}
            {Decide the selection according to: 
            
            \If{$\kappa_t \leq B-M$ \textnormal{\textbf{and}} $v_t \geq \lambda^*_{\kappa_t - 1}$}{
		              $x_t = 1$
                      
                      Update $\kappa_{t+1} = \kappa_t + x_t$
                }}
            }
            \caption{Deterministic Set-Aside with \GFQ guarantee (\dsag)}
        \label{alg_GFQ_Deterministic}
\end{algorithm}

Here, we first prove that Algorithm \ref{alg_GFQ_Deterministic}, with the thresholds designed in Theorem \ref{theorem-GFQ-deterministic}, always achieves $\alpha^*$-competitiveness. We then prove the optimality of this design compared to any other deterministic algorithm.

For a fixed input sequence $ I $, let the algorithm terminate after allocating $ Z $ out of $ B $ units of resources, obtaining a value of $ \ALG(I) $. Let $ S $ and $ S' $ be the sets of items selected by Algorithm \ref{alg_GFQ_Deterministic} and the optimal solution, respectively. We denote the number and value of the common items by $ W = |S \cap S'| $ and $ V = \sum_{t \in S \cap S'} v_t $. Since the admission thresholds are monotonically increasing, we observe that for any item $ j $ not selected by the algorithm, $ v_j \leq \lambda^*_Z $. As a result,
\begin{align*}
    \OPT(I) \leq V + \lambda^*_{Z-M} \cdot (B-W).
\end{align*}
Let $V' = \sum_{t\in (S\slash S')} v_t$ be the value of those items that are selected by the algorithm and not selected by the offline optimum. As a result, we can see that:
\begin{align*}
    \frac{\OPT(I)}{\ALG(I)}\leq \frac{V + \lambda^*_{Z-M} \cdot (B-W)}{V + V'}.
\end{align*}
Since each item $j$ picked in $S$ after the satisfying the \GFQ constraint and selected as the $i$-th item must have valuation at least $\lambda^*_{i-1}$, we have:
\begin{align*}
    V&\geq \sum_{t\in S\cap S'} \lambda^*_t + M_1, \quad \text{call this } V_1,\\
    V'&\geq \sum_{t S\slash S'} \lambda^*_t + M_2, \quad \text{call this } V_2,
\end{align*}
where $M_1$ and $M_2$ are the part of \GFQ constraints that are satisfied in $V$ and $V'$, respectively and $M_1 + M_2 = M$. Since $\OPT(I)\geq \ALG(I)$, we can see that:
\begin{align*}
    \frac{\OPT(I)}{\ALG(I)}\leq \frac{V + \lambda^*_{Z-M} \cdot (B-W)}{V + V'} \leq \frac{V_1 + \lambda^*_{Z-M} \cdot (B-W)}{V_1 + V'} \leq \frac{V_1 + \lambda^*_{Z-M} \cdot (B-W)}{V_1 + V_2}.
\end{align*}
Additionally, by monotonicity of admission thresholds, we get $V_1 \leq \lambda^*_{Z-M} (W - \max_{i\in[j-1]}\{m_i\}) + \sum_{i=1}^{j-1} [m_i - \max_{l\in[i-1]}\{m_l\}]^+\cdot\theta_i$ when $\lambda^*_{Z-M}$ is in $[\theta_{j-1}, \theta_j]$. Furthermore, $V_1 + V_2 = M + \sum_{i=0}^{Z-M-1}\lambda^*_i $. As a result we get
\begin{align*}
     \frac{\OPT(I)}{\ALG(I)}&\leq \frac{\lambda^*_{Z-M}\cdot (B-\max_{i\in[j-1]}\{m_i\}) + \sum_{i=1}^{j-1} [m_i - \max_{l\in[i-1]}\{m_l\}]^+\cdot\theta_i}{M + \sum_{i=0}^{Z-M-1}\lambda^*_i} \\&\leq \frac{\lambda^*_{Z-M}\cdot C_j + D_j}{M + \sum_{i=0}^{Z-M-1}\lambda^*_i}.
\end{align*}
Based on the system of equations presented in Theorem \ref{theorem-GFQ-deterministic}, it is easy to verify that $\frac{\lambda^*_{Z-M}\cdot C_j + D_j}{M + \sum_{i=0}^{Z-M-1}\lambda^*_i} = \alpha^*$ and this concludes the $\alpha^*$-competitiveness of Algorithm \ref{alg_GFQ_Deterministic}.

Now, we will discuss the optimality of the presented design in Theorem \ref{theorem-GFQ-deterministic}. Let us first introduce a hard instance for the \OMCS problem with \GFQ constraints.
\begin{lettereddef}[\GFQ Fairness Guarantee Hard Instance in Integral Setting: $\mathcal{I}^{i-\GFQ}$] \label{def_instance_I_i_GFQ}
    Instance $I^{i-\GFQ}$ is defined as a scenario characterized by a continuous, non-decreasing sequence of valuation arrivals. In this scenario, each valuation is replicated for every class as long as it remains feasible and a copy of this valuation which has the label of all possible classes. For some value of $\epsilon$ such that $\epsilon \rightarrow 0$, instance $I^{i-\GFQ}$ can be shown as follows:
    \begin{align*}
        I^{i-\GFQ} = \Biggl\{&\underbrace{\boldsymbol{(1, \{1\})}, \boldsymbol{(1,\{2\})}, \dots , \boldsymbol{(1, \{K\})}, \boldsymbol{(1, \{1,\dots,K\})}}_{K+1\textit{ number of agents}}, \\ &\underbrace{\boldsymbol{(1+\epsilon, \{1\})}, \dots, \boldsymbol{(1+\epsilon, \{K\})}, \boldsymbol{(1+\epsilon,\{1,\dots,K\})}}_{K+1\textit{ number of agents}}, \dots,\\
        &\underbrace{{\boldsymbol{(\theta_{1}, \{1\})}, \dots ,\boldsymbol{(\theta_{1}, \{K\})}}, \boldsymbol{(\theta_1, \{1,\dots,K\})}}_{K+1\textit{ number of agents}}, \\&\underbrace{\boldsymbol{(\theta_{1}+\epsilon, \{2\})}, \dots , \boldsymbol{(\theta_{1}+\epsilon, \{K\})}, \boldsymbol{(\theta_1+\epsilon, \{2,\dots,K\})}}_{K \textit{ number of agents}}, \dots, \underbrace{\boldsymbol{(\theta_{k}, \{K\})}}_{1\textit{ agent}}  \Biggr\},
    \end{align*}
    where in above $\boldsymbol{(v,\mathcal{J})}$, $\forall j \in [k]$, corresponds to the $B$ copies of an agent with valuation equal to $v$ with the label set $\mathcal{J}$.
\end{lettereddef}
Now, under this instance, any deterministic algorithm should reserve $M$ items to ensure that the \GFQ constraint is always satisfied. Additionally, it should select $B - M + 1$ admission prices, denoted by $\boldsymbol{\lambda}$, in a way that minimizes the competitive ratio at any stopping point of the sequence $I^{i-\GFQ}$.
When the stopping point is 1, the optimal offline solution will allocate the entire capacity at this price. However, the online algorithm cannot do such things since the adversary might send much higher prices followed by this stream and penalize the algorithm. Therefore, in order to maintain the $\alpha$-competitiveness, the algorithm will allocate only $\tau$ units to the agents with valuation 1 where $\tau$ is in a way that:
\[M + \tau + 1 \ge \max\left\{M,\frac{B}{\alpha^*}\right\}.\]
In this way, it is always possible to guarantee the $\alpha$-competitiveness when all the valuations are 1.

Now we consider two cases based on the value of $M$. By starting the stream of higher prices, the algorithm will also set the higher admission thresholds in a way the following equation system satisfies.

\textbf{Case 1.} As the first case, we consider the situation where $M\leq B/\alpha$. For any arbitrary small $\delta$, the system of equations is 
\begin{align*}
    \begin{cases}
        M + (\tau+1)\cdot1 = \frac{1}{\alpha}\left[ C_1\cdot (\lambda_{\tau + 1} - \delta) + D_1\right] = \frac{1}{\alpha} \cdot \Delta^{{\tau+1}},\\
        M + (\tau + 1) \cdot 1 + \lambda_{\tau+1} = \frac{1}{\alpha}\left[ C_1 \cdot(\lambda_{\tau+2} - \delta) + D_1\right] = \frac{1}{\alpha} \cdot \Delta^{{\tau+2}},\\
        \quad \vdots\\
        M + (\tau+1)\cdot 1 + \lambda_{\tau+1} + \dots + \lambda_{B-M-1}  = \frac{1}{\alpha}\left[ C_K \cdot(\lambda_{B-M} - \delta) + D_K\right] = \frac{1}{\alpha} \cdot \Delta^{B-M},
    \end{cases}
\end{align*}
which implies that 
\begin{align*}
    \alpha = \frac{\Delta^{\tau+1}}{\tau+1} = \frac{\Delta^{i+1} - \Delta^{i}}{\lambda_{i}} \quad \forall i\in [\tau+1, B-M-1],
\end{align*}
where $\lambda_i \in [\theta_{j-1}, \theta_j]$ and $\Delta^i = Cj \cdot \lambda_i + D_j$.

\textbf{Case 2.} In the second case, we consider the situation where $M > B/\alpha$. In this case there is no need to set any admission threshold at 1 since by maintaining the \GFQ constraint, the $\alpha$-competitiveness is guaranteed automatically when all the valuations is at 1. As a result, for any arbitrary small $\delta$, the system of equations is
\begin{align*}
    \begin{cases}
        M \cdot 1 = \frac{1}{\alpha}\left[ C_{j^*} \cdot (\lambda_{0} - \delta) + D_{j^*}\right] = \frac{1}{\alpha} \cdot \Delta^{0},\\
        M \cdot 1 + \lambda_{0} = \frac{1}{\alpha}\left[ C_{j^*} \cdot(\lambda_{1} - \delta) + D_{j^*}\right] = \frac{1}{\alpha} \cdot \Delta^{1},\\
        \quad \vdots\\
        M \cdot 1 + \lambda_{0} + \dots + \lambda_{B-M-1}  = \frac{1}{\alpha}\left[ C_K \cdot(\lambda_{B-M} - \delta) + D_K\right] = \frac{1}{\alpha} \cdot \Delta^{B-M},
    \end{cases}
\end{align*}
where $j^*$ is a class index such that $M = \frac{1}{\alpha}(C_{j^*}\lambda_0 + D_{j^*})$ and $\lambda_0\in[\theta_{j^*-1}, \theta_{j^*}]$. This implies 
\begin{align*}
    \alpha = \frac{\Delta^{\tau+1}}{\tau+1} = \frac{\Delta^{i+1} - \Delta^{i}}{\lambda_{i}} \quad \forall i\in [\tau+1, B-M-1],
\end{align*}
where $\lambda_i \in [\theta_{j-1}, \theta_j]$ and $\Delta^i = C_j \cdot \lambda_i + D_j$. We thus conclude the optimality of the design in Theorem \ref{theorem-GFQ-deterministic}.

Figure \ref{fig:admission-thresholds} shows the thresholds in different settings of \OMCS with various \GFQ\ constraints, denoted as \dd-\GFQ. These are compared to the \OMCS without fairness consideration, based on the algorithm designed in \cite{tan2023threshold}, denoted as \dd-\dynamic. As pointed out by prior studies such as \cite{tan2023threshold}, this algorithm is simple and fails matching the lower-bound specially in low inventory cases.
\begin{figure}
    \centering
    \includegraphics[width=0.7\linewidth]{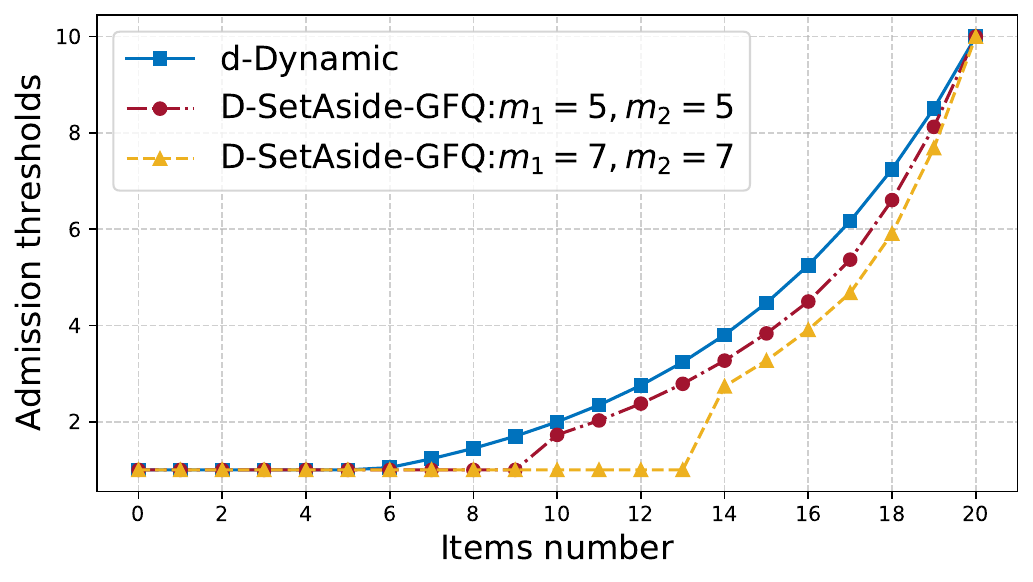}
    \caption{The red and yellow curves (\dsag) show the impact of \GFQ constraint compare to the blue curve which shows the thresholds without any fairness consideration (\dd-\dynamic). This figure shows that higher total \GFQ requirements leads to having lower admission thresholds. Here we set $B=20$, $\theta_1 = 5$ and $\theta_2=10$.}
    \label{fig:admission-thresholds}
\end{figure}

\subsection{Proof of Theorem \ref{theorem-gfq-randomized}}\label{appendix-theorem-gfq-randomized}

We use the mathematical induction to prove that the expected performance of Algorithm \ref{alg_GFQ_Randomized} is equal to the fractional performance at every time steps. Let $\tilde{x}_t$ be the decision of a fractional algorithm \textsc{Frac-GFQ} and $U^t = \sum_{t'=0}^t v_{t'}\cdot \tilde{x}_{t'}$ be the cumulative utility of the agents up to time $t$ from the fractional solution.
Now let us consider the base case and let $\bar{t}$ be the first time step after satisfying \GFQ constraints such that $\tilde{x}_{\bar{t}}\in(0,1]$. This means that, at this time, the optimal fractional algorithm allocates an item fractionally to the $\bar{t}$-th buyer. Thus in this case, $\kappa_{\bar{t}} =\lceil z_{\bar{t}}\rceil$. As a result, the expected utility of Algorithm \ref{alg_GFQ_Randomized} is 
\begin{align*}
    \sum_{t=1}^{\bar{t}}\mathbb{E}[v_t\cdot x_t] = v_{\bar{t}}\cdot \mathbb{E}[x_{\bar{t}}] = v_{\bar{t}}\cdot (1\cdot(z_{\bar{t}}-\lceil z_{\bar{t}-1}\rceil )) = v_{\bar{t}}\cdot (1\cdot(z_{\bar{t}}- z_{\bar{t}-1} )) = v_{\bar{t}}\cdot \tilde{x}_{\bar{t}} = U^{\bar{t}}.
\end{align*}
Now it is sufficient to show the induction step to complete this proof. Let assume the expected performance of Algorithm \ref{alg_GFQ_Randomized} up to time $\hat{t}$ satisfies $\sum_{t=1}^{\hat{t}}\mathbb{E}[v_t\cdot x_t] = U^{\hat{t}}$. Now we need to prove that $\sum_{t=1}^{\hat{t}+1}\mathbb{E}[v_t\cdot x_t] = U^{\hat{t}+1}$. 
The availability probability of item $\lceil z_{\hat{t}} \rceil$ at time $\hat{t}$ is $\lceil z_{\hat{t}} \rceil - z_{\hat{t}}$, which is proven based on induction as follows. Let us write the availability probability of this item as:
\begin{align*}
    P(\text{item }\lceil z_{\hat{t}} \rceil \text{ available}) =& P(\text{item }\lceil z_{\hat{t}} \rceil \text{ available} | \text{item }\lceil z_{\hat{t}-1} \rceil \text{ available}) \cdot P(\text{item }\lceil z_{\hat{t}-1} \rceil \text{ available}) + \\
    P(\text{item }\lceil z_{\hat{t}} &\rceil \text{ available} | \text{item }\lceil z_{\hat{t}-1} \rceil \text{ not available}) \cdot P(\text{item }\lceil z_{\hat{t}-1} \rceil \text{ not available}).
\end{align*} 
Now in the case $\lceil z_{\hat{t}} \rceil = \lceil z_{\hat{t}-1} \rceil$, the probability of item $\lceil z_{\hat{t}-1} \rceil$ being available is 0. As a result, we have
\begin{align*}
    P(\text{item }\lceil z_{\hat{t}} \rceil \text{ available}) =& \left(1 - \frac{\tilde{x}_{\hat{t}-1}}{\lceil z_{\hat{t}-1} \rceil - z_{\hat{t}-1}}\right) \cdot (\lceil z_{\hat{t}-1} \rceil - z_{\hat{t}-1})+  0\\
    =& \lceil z_{\hat{t}-1} \rceil - z_{\hat{t}-1} - \tilde{x}_{\hat{t}-1} = \lceil z_{\hat{t}} \rceil - z_{\hat{t}}.
\end{align*} 
Now when $\lceil z_{\hat{t}} \rceil \neq \lceil z_{\hat{t}-1} \rceil$ we have
\begin{align*}
    P(\text{item }\lceil z_{\hat{t}} \rceil \text{ available}) =& 1 \cdot \left(\lceil z_{\hat{t}-1}\rceil - z_{\hat{t}-1}\right) +  \left(1- \frac{z_{\hat{t}} - \lceil z_{\hat{t}-1}\rceil}{1 - \lceil z_{\hat{t}-1}\rceil+ z_{\hat{t}-1}}\right)\cdot (1-\lceil z_{\hat{t}-1}\rceil + z_{\hat{t}-1})\\
    =& \lceil z_{\hat{t}-1}\rceil - z_{\hat{t}-1} + 1 + z_{\hat{t}-1} - z_{\hat{t}} =  \lceil z_{\hat{t}-1}\rceil + 1 - z_{\hat{t}} = \lceil z_{\hat{t}} \rceil - z_{\hat{t}}.
\end{align*}
This concludes that at each time $\hat{t}$, item $\lceil z_{\hat{t}} \rceil$ is available with probability $\lceil z_{\hat{t}} \rceil - z_{\hat{t}}$. Additionally, since the utilities are linear we have $\sum_{t=1}^{\hat{t}+1}\mathbb{E}[v_t\cdot x_t] = U^{\hat{t}} + v_{\hat{t}+1}\cdot \mathbb{E}[x_{\bar{t}+1}]$.

Based on the algorithm, there are two possible cases regarding the allocated item at time $\hat{t}+1$: 
\begin{description}
    \item[Case 1 - $\lceil z_{\hat{t}+1}\rceil = \lceil z_{\hat{t}}\rceil$:] In this case the only possibility is to allocate item $\lceil z_{\hat{t}}\rceil$. This item is available with probability $\lceil z_{\hat{t}} \rceil - z_{\hat{t}}$. Therefore, the expected utility in this case is:
        \begin{align*}
            \sum_{t=1}^{\hat{t}+1}\mathbb{E}[v_t\cdot x_t] &= U^{\hat{t}} + v_{\hat{t}+1}\cdot \mathbb{E}[x_{\bar{t}+1}]\\
            & = U^{\hat{t}} + v_{\hat{t}+1}\cdot 1\cdot \frac{\tilde{x}_{\hat{t}+1}}{\lceil z_{\hat{t}} \rceil - z_{\hat{t}}} \cdot (\lceil z_{\hat{t}} \rceil - z_{\hat{t}}) \\
            & = U^{\hat{t}} + v_{\hat{t}+1}\cdot \tilde{x}_{\hat{t}+1} \\
            & = U^{\hat{t}+1}.
        \end{align*}
    \item[Case 2 - $\lceil z_{\hat{t}+1}\rceil \neq \lceil z_{\hat{t}}\rceil$:] In this case, if item $\lceil z_{\hat{t}}\rceil$ was still available, it will be allocated with probability 1. If it is not available, item $\lceil z_{\hat{t}+1}\rceil$ will start to be allocated to this buyer. Therefore, the expected utility in this case is:
    \begin{align*}
            \sum_{t=1}^{\hat{t}+1}\mathbb{E}[v_t\cdot x_t] &= U^{\hat{t}} + v_{\hat{t}+1}\cdot \mathbb{E}[x_{\bar{t}+1}]\\
            &= U^{\hat{t}} + v_{\hat{t}+1}\cdot \left(1\cdot 1 \cdot (\lceil z_{\hat{t}} \rceil - z_{\hat{t}}) + 1 \cdot \frac{z_{\hat{t}+1} - \lceil z_{\hat{t}} \rceil}{1-\lceil z_{\hat{t}}\rceil + z_{\hat{t}}} \cdot \left(1-(\lceil z_{\hat{t}}\rceil - z_{\hat{t}})\right) \right) \\
            &= U^{\hat{t}} + v_{\hat{t}+1}\cdot (z_{\hat{t}+1} - z_{\hat{t}}) \\
            & = U^{\hat{t}} + v_{\hat{t}+1}\cdot \tilde{x}_{\hat{t}+1} \\
            & = U^{\hat{t}+1}.
        \end{align*}
\end{description}
In both cases, we can see that the expected performance of Algorithm \ref{alg_GFQ_Randomized} remains the same as that of the fractional algorithm. Additionally, we know that Algorithm \ref{alg-fractional-GFQ} achieves the optimal competitive ratio in the fractional setting . As a result, this algorithm can guarantee the same expected performance of the fractional optimal solution. We highlight that \textsc{Rounding} can preserve the performance of any fractional algorithm. We provide an example of such algorithm in Appendix \ref{appendix-example-frac-gfq}.

\subsection{An Example of Optimal Fractional Algorithm for \OMCS with \GFQ}\label{appendix-example-frac-gfq}
Here we provide an example of an optimal fractional algorithm with \GFQ constraint in the relaxed fractional setting in Algorithm \ref{alg-fractional-GFQ}. This algorithm leverages a singe threshold function to achieve the optimal competitive ratio. In particular, the algorithm reserves $M$ items to ensure the satisfaction of \GFQ requirement where $M = \sum_{j\in[K]} m_i$ and then allocate the remaining $B-M$ items based on the threshold function $\phi(u)$. Let $C_j =  B - \max_{i\in[j-1]}\{m_i\}$ and $D_j = \sum_{i=1}^{j-1} \theta_i\cdot [m_i-\max_{k\in[i-1]}\{m_k\}]^+$ where $[\cdot]^+=\max \{\cdot, 0\}$. The design of the threshold function is based on the value of $M$. In particular, when $M \leq  \frac{B}{\alpha_0^*}$, the threshold function $\phi$ is designed as 
\begin{align*}
    \phi(u) = \begin{cases}
        1 & u\in [0, \Gamma^0],\\
        \exp\left(\frac{\alpha_0^*\cdot (u+M) - B - \sum_{i=1}^{j-1}(C_i - C_{i+1}) \ln\theta_i}{C_j}\right) & u \in [\Gamma^{j-1}, \Gamma^j], \quad\forall j \in [K],
    \end{cases}
\end{align*}
and is $\alpha_0^*$-competitive where $\alpha_0^* \coloneqq 1 + \ln\theta_K - \sum_{j=1}^{K-1} \frac{(C_j-C_{j+1})}{B} \ln(\frac{\theta_K}{\theta_j})$ and $\Gamma^j = \frac{B}{\alpha_0^*} - M + \frac{C_{j}}{\alpha_0^*}\ln\theta_j+\frac{1}{\alpha_0^*}\sum_{i=1}^{j-1}(C_i-C_{i+1}) \ln\theta_i$. On the other hand, when $M \in (\frac{\theta_{j^*-1} \cdot C_{j^*} + D_{j^*}}{\alpha_{j^*}},\frac{\theta_{j^*} \cdot C_{j^*} + D_{j^*}}{\alpha_{j^*}}]$ for some $j^*\in[K]$, the threshold function $\phi(u)$ is designed as
\begin{align*}
    \phi(u) = \begin{cases}
        v^* \exp\left(\frac{\alpha_{j^{*}}\cdot u - \sum_{i=j^*}^{j-1}(C_i-C_{i+1}) \ln\left(\frac{\theta_i}{v^*}\right)}{C_j}\right) & u \in [\Gamma^{j-1}_{j^*}, \Gamma^{j}_{j^*}] ,\quad \forall j \in \{j^*,\cdots, K\},
    \end{cases}
\end{align*}
and is $\alpha_{j^{*}}$-competitive where $\alpha_{j^*}$ is defined as
\begin{align*}
   \alpha_{j^*} = \frac{D_{j^*}}{M}+ \frac{C_{j^*}}{B-M}W\left(\frac{\theta_{K}(B-M)}{M} \exp\left(-\frac{X}{C_{j^*}}\right)\exp\left(-\frac{D_{j^*}(B-M)}{C_{j^*} \cdot M}\right)\right),
\end{align*}
with $X=\sum_{i=j^*}^{K-1} (C_i-C_{i+1})\cdot \ln\left(\frac{\theta_K}{\theta_i}\right)$, $v^* = (\alpha_{j^{*}}\cdot M -D_{j^*})/C_{j^*}$, and $\Gamma^{j}_{j^*} = \frac{C_{j}}{\alpha_{j^{*}}}\ln\frac{\theta_{j}}{v^*}+\frac{1}{\alpha_{j^{*}}}\sum_{i=j^*}^{j-1}(C_i-C_{i+1}) \ln\frac{\theta_i}{v^*}$. This algorithm builds on the fractional approach of \cite{zargari2025online}; for a comprehensive treatment of that method, we direct the reader to that work, since it lies beyond the primary scope of this paper.

\begin{algorithm}[H]
    \SetKwInput{KwInput}{Input}
    \SetKwInput{KwInitialization}{Initialization}
    \SetKwComment{Com}{\textcolor{gray}{$\triangleright$} }{}
  \KwInput{ $ (m_j, \theta_{j}), \forall j \in [K] $.}
  \KwInitialization{Initial global utilization, $ u_0 = 0$; Initial utilization of class $j$, $u^{j}_{0} = 0, \forall j \in [K] $.}
  
  \While{agent $ t$ arrives}{
    Obtain the valuation and class information of agent $ t $: $ v_t $ and $\mathcal{J}_t$\;

    \If(\Com*[f]{\textcolor{gray}{Satisfying \GFQ constraint.}}){$u^{j_t}_{t-1} < m_{j_{t}}$}{
      $y_{t} = \min\{r_t, m_{j_{t}} - u^{j_{t}}_{t-1}$\}.
      
      Update $u^{j_{t}}_{t} = u^{j_{t}}_{t-1} + y_{t}$.
      }

    \If(\Com*[f]{\textcolor{gray}{Allocating the remaining resource.}}){$v_t \geq \phi(u_{t-1})$}{
		$ x_t = \min\left\{\argmax_{a\in[0, r_t - y_t]}\left\{a v_t-\int_{u_{t-1}}^{u_{t-1}+a} \phi(\eta)d\eta \right\}, B-M -u_{t-1} \right\}.$}
    Update the cumulative allocation: $ u_t = u_{t-1} + x_t $. 
    
    Update the allocation amount of agent $t$: 
            $x_{t} = x_{t} + y_{t}$.
            }
  \caption{Fractional \OMCS with \GFQ guarantee (\textsc{Frac-GFQ})}
  \label{alg-fractional-GFQ}
\end{algorithm}

\section{Section \ref{section-Proportional-Fairness} Proofs}\label{appendix-beta-PF}
\subsection{Proof of Theorem \ref{theorem-proportional-fairness-overlapping}}\label{appendix-theorem-proportional-fairness-overlapping}
Let $\mathbf{x}$ denote \rsap's final allocation. Here we consider the following relaxed LP and its dual which does not have any allocation limit at each time step:

\begin{minipage}[t]{0.45\textwidth}
    \begin{align*}
      &(\text{Primal})\\
      &\max_{\mathbf{w}\in\mathbb{R}_+^T}
      \;\frac{1}{K}\sum_{i=1}^K \frac{U_i(\mathbf{w})}{U_i(\mathbf{x})}\\
      &\text{s.t.}\quad
      \sum_{t=1}^T w_t \;\le\; B
    \end{align*}
  \end{minipage}
  \hfill
  \begin{minipage}[t]{0.45\textwidth}
    \begin{align*}
      &(\text{Dual})\\
      &\min_{q\in\mathbb{R}_+}
      \;B\,q\\
      &\text{s.t.}\quad
      q \;\ge\; \frac{1}{K}\sum_{i\in\mathcal{J}_t} \frac{v_t}{U_i(\mathbf{x})},\;\forall t
    \end{align*}
\end{minipage}
  
Now, show that $\beta(\mathfrak{b})/B$ is a feasible solution for the above dual LP. We can see that at time $t$, $U_i(\mathbf{x})\geq \sum_{j\in\mathcal{J}_t} \Phi_{i,j}(v_t)$, where $\Phi_{i,j}(v) =  \Upsilon_{i,j}(1)+ \int_{\Upsilon_{i,j}(1)}^{\Upsilon_{i,j}(v)} \phi_{i,j}(u) du$, where $\Upsilon_{i,j}(v)$ for each $v \in [1, \min{\theta_i,\theta_j}]$ as follows:
\begin{align*}
    \Upsilon_{i,j}(v) = \argmax_{a \geq 0} \left(a \cdot v - \int_{0}^{a} \phi_{i,j}(u) \, du\right).
\end{align*}
As a result, it is easy to see that:
\begin{align*}
    U_i(\mathbf{x})&\geq \Phi_{i,i}(v_t) + \sum_{j\in\mathcal{J}_t, j\neq i} \Phi_{i,j}(v_t) + \Phi_{G}(v_t)\\
    &\geq \Phi_{i,i}(v_t) + \sum_{j\in\mathcal{J}_t, j\neq i} \Phi_{i,j}(v_t)\\
    &\geq  \left[\frac{B}{K\cdot \beta(\mathfrak{b})}+ \frac{B}{K\cdot \beta(\mathfrak{b})}\cdot(v_t-1)\right]\cdot(1 + |\mathcal{J}_t|-1)\\
    & \geq \frac{B\cdot v_t \cdot \mathcal{J}_t}{K\cdot \beta(\mathfrak{b})}.
\end{align*}
Therefore, we can easily see that:
\begin{align*}
    \frac{\beta(\mathfrak{b})}{B}\geq\frac{1}{K} \sum\limits_{i\in{\mathcal{J}_t}}\frac{v_{t}}{\frac{B\cdot v_t \cdot \mathcal{J}_t}{K\cdot \beta(\mathfrak{b})}} \geq \frac{1}{K} \sum\limits_{i\in{\mathcal{J}_t}}\frac{v_{t}}{U_i(\mathbf{x})}.
\end{align*}
Then by weak duality we will have:
\begin{align*}
    \max\limits_{\mathbf{w} \in \mathbb{R}_+^T} \ \frac{1}{K} \sum\limits_{i=1}^K \frac{U_i(\mathbf{w})}{U_i(\mathbf{x})} \leq B\cdot\frac{\beta(\mathfrak{b})}{B} = \frac{1}{1 - \mathfrak{b}} \cdot \sum_{j\in[K]}\Big(1-\frac{j-1}{K} \Big)\cdot\alpha_j
\end{align*}
Additionally, we note that in this case, \OPT, the revenue of optimal offline algorithm can be lower bounded as follows
\begin{align*}
    \OPT \leq B\cdot v_t.
\end{align*}
On the other hand, \ALG, the objective of the Algorithm \ref{alg-proportional-fairness}, is lower bounded by 
\begin{align*}
    \ALG & \ge  \sum_{i,j\in\mathcal{J}_t} \Phi_{i,j}(v_t) + \Phi_{G}(v_t) \\
    & \ge  \left[\frac{B}{K\cdot \beta(\mathfrak{b})}+ \frac{B}{K\cdot \beta(\mathfrak{b})}\cdot (v_t-1)\right]\cdot\left( |\mathcal{J}_t|+\binom{|\mathcal{J}_t|}{2}\right) + \frac{B\cdot \beta(\mathfrak{b})}{\alpha_K}\cdot v_t \\
    & \ge  \left[\frac{B}{K\cdot \beta(\mathfrak{b})}+ \frac{B}{K\cdot \beta(\mathfrak{b})}\cdot (v_t-1)\right] + \frac{B\cdot \beta(\mathfrak{b})}{\alpha_K}\cdot v_t\\
    & \ge B\cdot v_t \cdot \left[\frac{1}{K\cdot \beta(\mathfrak{b})}+\frac{\beta(\mathfrak{b})}{\alpha_K}\right]\\
    & \ge \OPT \cdot \left[\frac{1}{K\cdot \beta(\mathfrak{b})}+\frac{\beta(\mathfrak{b})}{\alpha_K}\right]\\
    & = \OPT \cdot \frac{1}{\alpha(\mathfrak{b})}.
\end{align*}
This completes the proof of Theorem \ref{theorem-beta-proportional-fair} in the relaxed version. We point out that, based on the results of \cite{zargari2025online, sun2020competitive, lechowicz2023online}, the performance guarantee is also preserved in the constrained version with $x_t \leq 1$ for all $t\in[T]$.

\subsection{Proof of Corollary \ref{corollary-proportional-fairness-negarive-result}}\label{appendix-corollary-proportional-fairness-negarive-result}
When $|\mathcal{J}_t| = 1$ for all $t \in [T]$, each arriving agent is said to be single-labeled. In this case, the reservation budgets associated with pairwise threshold functions become redundant, and we can safely set $b_{i,j} = 0$ for all $i \ne j$, $i,j \in [K]$. Consequently, the resulting threshold function design reduces to the one proposed by~\cite{zargari2025online}, which is known to achieve the Pareto-optimal trade-off between fairness and efficiency. We refer to this paper for the full discussion about this trade-off.

\subsection{Proof of Theorem \ref{theorem-beta-proportional-fair}}\label{appendix-theorem-beta-proportional-fair}
To establish that Algorithm~\ref{alg-proportional-fairness} maintains its expected performance after the rounding step at every time step, we apply mathematical induction. Define the cumulative utility accrued by class \( i \) up to time \( t \) under the fractional allocation as  
$U_i^t = \sum_{t'=0}^t v_{t'} \cdot \tilde{x}_{t'} \cdot \boldsymbol{1}_{\{i \in \mathcal{J}_{t'}\}}$.
We start with the base case. Let \( \bar{t} \) be the first time index such that \( \tilde{x}_{\bar{t}} \in (0,1] \), indicating a non-integral allocation by the fractional algorithm at that step. At this point, the algorithm assigns \(\kappa_{\bar{t}} = \lceil z_{\bar{t}} \rceil\), triggering a randomized rounding. As a result, the expected utility after rounding step is 
\begin{align*}
    \sum_{t=1}^{\bar{t}}\mathbb{E}[v_t\cdot x_t\cdot\boldsymbol{1}_{\{i\in \mathcal{J}_t\}}] &= v_{\bar{t}}\cdot \mathbb{E}[x_{\bar{t}}\cdot\boldsymbol{1}_{\{i\in \mathcal{J}_t\}}] = v_{\bar{t}}\cdot (1\cdot\boldsymbol{1}_{\{i\in \mathcal{J}_t\}}\cdot(z_{\bar{t}}-\lceil z_{\bar{t}-1}\rceil )) \\&= v_{\bar{t}}\cdot (1\cdot\boldsymbol{1}_{\{i\in \mathcal{J}_t\}}\cdot(z_{\bar{t}}- z_{\bar{t}-1} )) = v_{\bar{t}}\cdot \tilde{x}_{\bar{t}}\cdot\boldsymbol{1}_{\{i\in \mathcal{J}_t\}} = U_i^{\bar{t}}.
\end{align*}
To complete the proof, it remains to verify the induction step. Suppose that up to time \(\hat{t}\), the expected performance of Algorithm~\ref{alg-proportional-fairness} aligns with the fractional utility, i.e., $\sum_{t=1}^{\hat{t}} \mathbb{E}[v_t \cdot x_t \cdot \boldsymbol{1}_{\{i \in \mathcal{J}_t\}}] = U_i^{\hat{t}}$. We aim to show that this equality holds at time \(\hat{t}+1\) as well ($\sum_{t=1}^{\hat{t}+1} \mathbb{E}[v_t \cdot x_t \cdot \boldsymbol{1}_{\{i \in \mathcal{J}_t\}}] = U_i^{\hat{t}+1}$).
To do so, we analyze the availability of item \(\lceil z_{\hat{t}} \rceil\) at time \(\hat{t}\). According to the rounding scheme, the probability that this item is still available is given by \(\lceil z_{\hat{t}} \rceil - z_{\hat{t}}\). This expression is derived inductively by examining the cumulative allocation and rounding behavior. We now proceed by explicitly computing this probability as follows:
\begin{align*}
    P(\text{item }\lceil z_{\hat{t}} \rceil \text{ available}) =& P(\text{item }\lceil z_{\hat{t}} \rceil \text{ available} | \text{item }\lceil z_{\hat{t}-1} \rceil \text{ available}) \cdot P(\text{item }\lceil z_{\hat{t}-1} \rceil \text{ available}) + \\
    P(\text{item }\lceil z_{\hat{t}} &\rceil \text{ available} | \text{item }\lceil z_{\hat{t}-1} \rceil \text{ not available}) \cdot P(\text{item }\lceil z_{\hat{t}-1} \rceil \text{ not available}).
\end{align*} 
Now in the case $\lceil z_{\hat{t}} \rceil = \lceil z_{\hat{t}-1} \rceil$, the probability of item $\lceil z_{\hat{t}-1} \rceil$ being available is 0. As a result, we have
\begin{align*}
    P(\text{item }\lceil z_{\hat{t}} \rceil \text{ available}) =& \left(1 - \frac{\tilde{x}_{\hat{t}-1}}{\lceil z_{\hat{t}-1} \rceil - z_{\hat{t}-1}}\right) \cdot (\lceil z_{\hat{t}-1} \rceil - z_{\hat{t}-1})+  0\\
    =& \lceil z_{\hat{t}-1} \rceil - z_{\hat{t}-1} - \tilde{x}_{\hat{t}-1} = \lceil z_{\hat{t}} \rceil - z_{\hat{t}}.
\end{align*} 
When $\lceil z_{\hat{t}} \rceil \neq \lceil z_{\hat{t}-1} \rceil$ we have
\begin{align*}
    P(\text{item }\lceil z_{\hat{t}} \rceil \text{ available}) =& 1 \cdot \left(\lceil z_{\hat{t}-1}\rceil - z_{\hat{t}-1}\right) +  \left(1- \frac{z_{\hat{t}} - \lceil z_{\hat{t}-1}\rceil}{1 - \lceil z_{\hat{t}-1}\rceil+ z_{\hat{t}-1}}\right)\cdot (1-\lceil z_{\hat{t}-1}\rceil + z_{\hat{t}-1})\\
    =& \lceil z_{\hat{t}-1}\rceil - z_{\hat{t}-1} + 1 + z_{\hat{t}-1} - z_{\hat{t}} =  \lceil z_{\hat{t}-1}\rceil + 1 - z_{\hat{t}} = \lceil z_{\hat{t}} \rceil - z_{\hat{t}}.
\end{align*}
This concludes that at each time $\hat{t}$, item $\lceil z_{\hat{t}} \rceil$ is available with probability $\lceil z_{\hat{t}} \rceil - z_{\hat{t}}$. Additionally, since the utilities are linear we have $\sum_{t=1}^{\hat{t}+1}\mathbb{E}[v_t\cdot x_t\cdot\boldsymbol{1}_{\{i\in \mathcal{J}_t\}}] = U^{\hat{t}} + v_{\hat{t}+1}\cdot \mathbb{E}[x_{\bar{t}+1}\cdot\boldsymbol{1}_{\{i\in \mathcal{J}_t\}}]$.

Based on the algorithm, there are two possible cases regarding the allocated item at time $\hat{t}+1$: 
\begin{description}
    \item[Case 1 - $\lceil z_{\hat{t}+1}\rceil = \lceil z_{\hat{t}}\rceil$:] In this case the only possibility is to allocate item $\lceil z_{\hat{t}}\rceil$. This item is available with probability $\lceil z_{\hat{t}} \rceil - z_{\hat{t}}$. Therefore, the expected utility in this case is:
        \begin{align*}
            \sum_{t=1}^{\hat{t}+1}\mathbb{E}[v_t\cdot x_t\cdot\boldsymbol{1}_{\{i\in \mathcal{J}_t\}}] &= U_i^{\hat{t}} + v_{\hat{t}+1}\cdot \mathbb{E}[x_{\bar{t}+1}\cdot\boldsymbol{1}_{\{i\in \mathcal{J}_t\}}]\\
            & = U_i^{\hat{t}} + v_{\hat{t}+1}\cdot 1\cdot \frac{\tilde{x}_{\hat{t}+1}\cdot\boldsymbol{1}_{\{i\in \mathcal{J}_t\}}}{\lceil z_{\hat{t}} \rceil - z_{\hat{t}}} \cdot (\lceil z_{\hat{t}} \rceil - z_{\hat{t}}) \\
            & = U_i^{\hat{t}} + v_{\hat{t}+1}\cdot \tilde{x}_{\hat{t}+1}\cdot\boldsymbol{1}_{\{i\in \mathcal{J}_t\}} \\
            & = U_i^{\hat{t}+1}.
        \end{align*}
    \item[Case 2 - $\lceil z_{\hat{t}+1}\rceil \neq \lceil z_{\hat{t}}\rceil$:] In this case, if item $\lceil z_{\hat{t}}\rceil$ was still available, it will be allocated with probability 1. If it is not available, item $\lceil z_{\hat{t}+1}\rceil$ will start to be allocated to this buyer. Therefore, the expected utility in this case is:
        \begin{align*}
            \sum_{t=1}^{\hat{t}+1}\mathbb{E}[v_t\cdot x_t\cdot\boldsymbol{1}_{\{i\in \mathcal{J}_t\}}] &= U_i^{\hat{t}} + v_{\hat{t}+1}\cdot \mathbb{E}[x_{\bar{t}+1}\cdot\boldsymbol{1}_{\{i\in \mathcal{J}_t\}}]\\
            &= U_i^{\hat{t}} + v_{\hat{t}+1}\cdot \left(1\cdot 1 \cdot (\lceil z_{\hat{t}} \rceil - z_{\hat{t}}) + 1 \cdot \frac{z_{\hat{t}+1} - \lceil z_{\hat{t}} \rceil}{1-\lceil z_{\hat{t}}\rceil + z_{\hat{t}}} \cdot \left(1-(\lceil z_{\hat{t}}\rceil - z_{\hat{t}})\right) \right) \\
            &= U_i^{\hat{t}} + v_{\hat{t}+1}\cdot (z_{\hat{t}+1} - z_{\hat{t}}) \\
            & = U_i^{\hat{t}} + v_{\hat{t}+1}\cdot \tilde{x}_{\hat{t}+1}\cdot\boldsymbol{1}_{\{i\in \mathcal{J}_t\}} \\
            & = U_i^{\hat{t}+1}.
        \end{align*}
\end{description}
In both cases, we can see that the expected performance after rounding step is similar to the relaxation step for each class $i\in[K]$. Additionally, with the same analysis we can proof that the expected total utility is also equal to the fractional part of the algorithm. As a result, it proofs that the fairness-efficiency trade-off is also same as the fractional case which is presented in Theorem \ref{theorem-proportional-fairness-overlapping}.

\section{Section \ref{section-learning-augmented} Proofs}\label{appendix-LA}
\subsection{Proof of Theorem \ref{theorem-LA-beta-PF}}\label{appendix-theorem-LA-beta-PF}
For a fixed instance $I$, let $U_i(\mathbf{x})$, $U_i(\mathbf{\hat{x}})$ and $U_i(\mathbf{\bar{x}})$ be the expected utility of class $i$ in \LinLA, \ADV and \ALG,  respectively. Then it can be seen that $U_i(\mathbf{x}) = \rho\cdot U_i(\mathbf{\hat{x}}) + (1-\rho)\cdot U_i(\mathbf{\bar{x}})$. Let us define the function $f(p) = \frac{1}{p}$. Now let us consider $p_j = \rho + (1-\rho) \frac{U_j(\mathbf{\bar{x}})}{U_j(\mathbf{\hat{x}})}$ and based on the Jensen's inequality we have
\begin{align*}
    \frac{1}{K}\sum_{j\in[K]}\frac{1}{\rho + (1-\rho) \frac{U_j(\mathbf{\bar{x}})}{U_j(\mathbf{\hat{x}})}} \leq \frac{1}{\rho + (1-\rho) \frac{1}{K}\sum_{j\in[K]}\frac{U_j(\mathbf{\bar{x}})}{U_j(\mathbf{\hat{x}})}}.
\end{align*}
Additionally, we can see
\begin{align*}
    \frac{K}{\sum_{j\in[K]} \frac{U_j(\mathbf{\bar{x}})}{U_j(\mathbf{\hat{x}})}} \leq \frac{1}{K}\sum_{j\in[K]}\frac{U_j(\mathbf{\hat{x}})}{U_j(\mathbf{\bar{x}})} \leq \beta,
\end{align*}
where the first inequality is based on the harmonic mean-arithmetic mean (HM-AM) inequality and the second one is base on the definition of $\beta$-\PF. As a result we can see that
\begin{align*}
    \frac{1}{K}\sum_{j\in[K]}\frac{1}{\rho + (1-\rho) \frac{U_j(\mathbf{\bar{x}})}{U_j(\mathbf{\hat{x}})}} \leq \frac{1}{\rho + (1-\rho) \frac{1}{K}\sum_{j\in[K]}\frac{U_j(\mathbf{\bar{x}})}{U_j(\mathbf{\hat{x}})}} \leq \frac{1}{\rho + (1-\rho)\frac{1}{\beta}},
\end{align*}
which implies
\begin{align*}
    \frac{1}{K}\sum_{j\in[K]}\frac{U_j(\mathbf{\hat{x}})}{U_j(\mathbf{x})} = \frac{1}{K} \sum_{j\in[K]}\frac{U_j(\mathbf{\hat{x}})}{\rho \cdot U_j(\mathbf{\hat{x}}) + (1-\rho) U_j(\mathbf{\bar{x}})} \leq \frac{\beta}{\rho\cdot \beta + (1-\rho)} = 1+\epsilon.
\end{align*}
Now let us consider $p_j = \rho\cdot \frac{U_j(\mathbf{\hat{x}})}{U_j(\mathbf{w})} + (1-\rho) \cdot \frac{U_j(\mathbf{\bar{x}})}{U_j(\mathbf{w})}$, where $\mathbf{w}$ is any feasible allocation. Using the Jensen's inequality we obtain
\begin{align*}
    \frac{1}{K} \sum_{j\in[K]}\frac{1}{\rho\cdot \frac{U_j(\mathbf{\hat{x}})}{U_j(\mathbf{w})} + (1-\rho) \cdot \frac{U_j(\mathbf{\bar{x}})}{U_j(\mathbf{w})}} \leq \frac{1}{\rho\cdot \frac{1}{K} \sum_{j\in[K]}\frac{U_j(\mathbf{\hat{x}})}{U_j(\mathbf{w})} + (1-\rho) \cdot \frac{1}{K} \sum_{j\in[K]}\frac{U_j(\mathbf{\bar{x}})}{U_j(\mathbf{w})}}. 
\end{align*}
Since \ADV does not have any constraint, therefore it is only possible to obtain $\frac{1}{K} \sum_{j\in[K]}\frac{U_j(\mathbf{\hat{x}})}{U_j(\mathbf{w})} \geq 0$. Additionally, based on the definition of $\beta$-\PF and HM-AM inequality, we have $\frac{1}{K} \sum_{j\in[K]}\frac{U_j(\mathbf{\bar{x}})}{U_j(\mathbf{w})} \geq \frac{1}{\beta}$. As a result, we have that
\begin{align*}
    \frac{1}{K} \sum_{j\in[K]}\frac{1}{\rho\cdot \frac{U_j(\mathbf{\hat{x}})}{U_j(\mathbf{w})} + (1-\rho) \cdot \frac{U_j(\mathbf{\bar{x}})}{U_j(\mathbf{w})}}  \leq \frac{1}{(1-\rho) \cdot \frac{1}{\beta}},
\end{align*}
which implies
\begin{align*}
    \frac{1}{K}\sum_{j\in[K]}\frac{U_j(\mathbf{w})}{U_j(\mathbf{x})} = \frac{1}{K} \sum_{j\in[K]}\frac{U_j(\mathbf{x})}{\rho \cdot U_j(\mathbf{\hat{x}}) + (1-\rho) U_j(\mathbf{\bar{x}})} \leq \frac{\beta}{1-\rho} = \frac{(1+\epsilon)(\beta-1)}{\epsilon}.
\end{align*}
This concludes the consistency and robustness proportional fairness guarantee presented in Theorem \ref{theorem-LA-beta-PF}.

\subsection{Proof of Pareto-optimality of Consistency-Robustness in Single-labeled Setting}\label{pareto-LA-single-label}
To demonstrate the Pareto-optimality result when $|\mathcal{J}_t|=1$, we first establish that the relaxed \LinLA—a linear combination of the robust decision and the advice—is Pareto optimal. We first construct a hard instance and then show that for any $\gamma$-robust learning augmented algorithm, their consistency $\eta$ is lower bounded under the special instances.
\begin{lettereddef}[$\beta$-\PF Fairness Guarantee Hard Instance: $I^{\PF}$] \label{def_instance_I_p_pf}
    Instance $I^{\PF}$ is defined as a scenario characterized by a at most $K$ continuous, non-decreasing sequence of valuation arrivals segments. In this scenario, first there are a sequence of arrivals from class $1$, followed by the second sequence of arrivals all from class $2$ and and this continues until the arrivals of class $K$. For some value of $\delta$ such that $\delta \rightarrow 0$, instance $I^{\PF}$ can be shown as follows:
    \begin{align*}
        I^{\PF} = \Biggl\{&\underbrace{\boldsymbol{(1,1)}, \boldsymbol{(1+\delta,1)}, \dots , \boldsymbol{(\theta_1,1)}}_{\text{First batch of arrivals}}, \underbrace{\boldsymbol{(1,2)}, \boldsymbol{(1+\delta,2)}, \dots , \boldsymbol{(\theta_2,2)}}_{\text{Second batch of arrivals}}, \dots,\\&\underbrace{\boldsymbol{(1,K)}, \boldsymbol{(1+\delta,K)}, \dots , \boldsymbol{(\theta_K,K)}}_{K\text{-th batch of arrivals}}  \Biggr\},
    \end{align*}
    where in above $\boldsymbol{(v,j)}$, $\forall j \in [k]$, corresponds to the $B$ copies of a buyer with valuation equal to $v$ from class $j$.
\end{lettereddef}

Let $g_j(p):[1,\theta_j]\to [0,b_j]$ denote a non-decreasing utilization function of any learning augmented algorithm for \OMCS problem. A key observation is that for a small $\delta$, executing the instance $I^\PF$ up to the $\boldsymbol{v}(c_j)$ is equivalent to first executing $I^\PF$ to the $\boldsymbol{v-\delta}(c_j)$ (excluding the last step) and then processing $\boldsymbol{v}(c_j)$ for some class $j\in[K]$. Additionally, since due to the budget constraint of each class, we can see that $g_j(\theta_j) \leq b_j$.

Let us first consider the case where the stoping poin is at some $v$ from class 1. Then, for any $\gamma$-robust proportionally fair online algorithm the \PF condition simplifies to:
\begin{align*}
    &\frac{1}{K}\cdot \frac{U_1(\mathbf{w})}{U_1(\mathbf{x})} \leq \frac{1}{K}\cdot \frac{B\cdot v}{g_1(1) + \int_1^v u~d g_1(u) } \leq \gamma.
\end{align*}
By integral by parts and the Gronwall’s inequality, a necessary condition for the above robustness constraint to hold is
\begin{align*}
    g_1(v) \geq \frac{B}{K\cdot \gamma}\cdot (1 + \ln\theta_1).
\end{align*}

In addition, to ensure $\eta$-consistency when the prediction is accurate and $v=\theta_1$, we must ensure $\frac{1}{K}\cdot \frac{U_1(\mathbf{w})}{U_1(\mathbf{x})} \leq \eta$. Combining this constraint with $g_i(\theta_1) \leq  b_1$ gives
\begin{align*}
	\frac{1}{K}\cdot \frac{U_1(\mathbf{w})}{U_1(\mathbf{x})} \leq \frac{1}{K}\cdot \frac{B\cdot \theta_1}{g_1(1) + \int_1^{\theta_1} u~dg_1(u) + (b_1 - g_1(\theta_1))\cdot \theta_1} \leq \eta,
\end{align*}
where $(b_1 - g_1(\theta_1)$ is the portion of $b_1$ that is remaining to ensuring the consistency. The above implies that
\begin{align*}
    b_1 \geq \frac{B}{K\cdot \eta} + \frac{B\cdot \ln\theta_1}{K\cdot \gamma}.
\end{align*}
By executing $I^\PF$ for other classes, we can see that $b_{i,j} \geq \frac{B}{K\cdot \eta} + \frac{B\cdot \ln\theta_j}{K\cdot \gamma}$. Therefore by summing up for all class $i,j\in[K]$ and since $B \leq \sum_{i,j\in[K]} b_{i,j}$, we can see that
\begin{align*}
    B \geq \sum_{i,j\in[K]}b_{i,j} \geq \sum_{i,j\in[K]}\left(\frac{B}{K\cdot \eta} + \frac{B\cdot \ln({\min\{\theta_i,\theta_j\}})}{K\cdot \gamma}\right) = B \cdot \left(\frac{K+1}{2\cdot\eta} + (\beta-1)\cdot \frac{1}{\gamma}\right),
\end{align*}
where $\beta = \frac{\sum_{j\in[K]}(K-j+1)\cdot\alpha_j}{K}$ is the proportional fairness of Algorithm \ref{alg-proportional-fairness}. Now by setting $\eta = 1 + \epsilon$, we obtain that
\begin{align*}
    \gamma \ge \frac{(\beta-1)(1+\epsilon)}{\epsilon}.
\end{align*}
This result states that for any $(1+\epsilon)$-consistent proportionally fair algorithm, the robustness is at least $\frac{(\beta-1)(1+\epsilon)}{\epsilon}$, which concludes the proof of the Pareto optimality. Additionally, using the same approach as Algorithm \ref{alg-proportional-fairness} to round the decisions, we can have the Pareto optimality result for the integral \LinLA as well.

\subsection{Proof of Corollary \ref{corollary-beta-PF-LA-alpha}}
Based on the proof of Theorem \ref{theorem-LA-beta-PF} we know that any $\eta$-consistent and $\gamma$-robust proportionally fair algorithm must reserve $b_j = \frac{B}{K\cdot \eta} + \frac{B\cdot \ln\theta_j}{K\cdot \gamma}$ for each class. As a result, under some instance $I$ that all the valuations are from some class $j\in[K]$ with the maximum value $v$, we can see that $\OPT(I) \leq B\cdot v$. On the other hand \LinLA can obtain $\LinLA(I)\geq g_j(1) + \int_1^v u~d g_j(u)$ for any $v\in[1,\theta_j)$. To ensure the $\gamma_\alpha$-robustness in terms of competitiveness, it is essential to satisfy 
\begin{align*}
    g_j(1) + \int_1^v u~d g_1(u) \geq \frac{1}{\gamma_\alpha} \cdot B \cdot v.
\end{align*}
Now based on the design of $g_j(\cdot)$ that can achieve $\gamma$-robustness proportional fairness, we have
\begin{align*}
    \frac{B}{K\cdot \gamma} \geq \frac{B}{\gamma_\alpha}\cdot v,
\end{align*}
which implies $\gamma_\alpha \geq K\cdot \gamma = K\cdot \left(\frac{(1 + \epsilon)(\beta - 1)}{\epsilon} \right)$. Now when the prediction is accurate and $v=\theta_j$, we have
\begin{align*}
    g_j(1) + \int_1^{\theta_j} u~d g_j(u) + (b_j - g_j(\theta_j))\theta_j \geq \frac{1}{\eta_\alpha}\cdot B \cdot \theta_j,
\end{align*}
which based on the reservation $b_j = \frac{B}{K\cdot \eta} + \frac{B\cdot \ln\theta_j}{K\cdot \gamma}$ implies that $\eta_\alpha \geq K\cdot \eta = K\cdot (1+\epsilon)$. This concludes that any $\eta$-consistent and $\gamma$-robust proportionally fair algorithm is at least $\eta_\alpha$-consistent and $\gamma_\alpha$-robust competitive with $\eta_\alpha = K\cdot \eta$ and $\gamma_\alpha = K\cdot \gamma$. 

\subsection{Learning-Augmented Algorithm for \OMCS with \GFQ.}\label{appendix-theorem-LA-GFQ}
Here we consider a learning-augmented algorithm that utilizes untrusted machine learning-generated advice to improve the performance of robust algorithms for \OMCS with \GFQ guarantee. 

\begin{lettereddef}[Advice model of \OMCS with \GFQ]\label{def-advice-gfq}
    For the \OMCS with \GFQ fairness guarantee, we denote $\ADV \coloneqq \{\hat{x}_t \in \{0,1\}| \sum\nolimits_{t\in [T]} \hat{x}_t \cdot  \boldsymbol{1}_{ \{j\in \mathcal{J}_t\}} \geq m_j, \forall j\in[K]$\} as the untrusted black-box decision advice.
\end{lettereddef}

The first observation we can make is that it is assumed \ADV always satisfies the \GFQ constraint. Additionally if the advice is completely correct, it basically recovers the optimal offline decisions. Here we again use \LinLA which combines the robust decision (i.e., \(\bar{x}_t\)) that is resulted from Algorithm \ref{alg_GFQ_Randomized} at each time step and the predicted optimal solution (i.e., \(\hat{x}_t\)) generated by black-box advice with a combination probability $\rho$ that shows the reliance on the advice decision. As a result, the decision of the learning augmented algorithm at each time in expectation is $x_t = \rho \hat{x}_t + (1-\rho)\bar{x}_t$.

For an \(\epsilon \in [0, \alpha - 1]\), where \(\alpha\) is the competitive ratio of Algorithm \ref{alg_GFQ_Randomized}, we sets the combination probability as $\rho \coloneqq (\frac{\alpha}{1+\epsilon} - 1) \cdot \frac{1}{\alpha - 1}$ which is in $[0,1]$. Here we can observe that as the prediction error $\epsilon$ approaches to $0$, $\rho$ approaches to 1, which means that the algorithm fully trusts the prediction. Theorem \ref{theorem-LA-GFQ} below shows our main results of the learning augmented algorithm of \OMCS with \GFQ constraints. 

\begin{theorem}[Learning-augmented algorithm for \OMCS with \GFQ]\label{theorem-LA-GFQ}
    For any $\epsilon \in [0,\alpha - 1]$, \LinLA for \OMCS with \GFQ constraint is $(1+\epsilon)$-consistent and $ \left(\frac{(1+\epsilon)(\alpha-1)}{\epsilon + (\alpha - 1- \epsilon)\cdot \frac{M}{C_K\theta_K + D_K}} \right)$-robust.
\end{theorem}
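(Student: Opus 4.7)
The plan is to mirror the decomposition used in the proof of Theorem~\ref{theorem-LA-beta-PF}, treating \LinLA as a convex combination of two policies: with probability $\rho$ it follows the advice $\hat{\mathbf{x}}$ end-to-end, and with probability $1-\rho$ it runs the \rsag policy $\bar{\mathbf{x}}$. Writing $\ALG$, $\hat{\ALG}$, and $\bar{\ALG}$ for the respective expected utilities on instance $I$, linearity of the utility objective yields
\[
\mathbb{E}[\ALG(I)] \;=\; \rho\,\mathbb{E}[\hat{\ALG}(I)] + (1-\rho)\,\mathbb{E}[\bar{\ALG}(I)].
\]
Theorem~\ref{theorem-gfq-randomized} supplies $\mathbb{E}[\bar{\ALG}(I)] \geq \OPT(I)/\alpha$ on every instance, and Definition~\ref{def-advice-gfq} guarantees that $\hat{\mathbf{x}}$ is always a \GFQ-feasible integral allocation.

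For $(1+\epsilon)$-consistency, I would assume $\mathbb{E}[\hat{\ALG}(I)] = \OPT(I)$ and chain the two bounds to obtain $\mathbb{E}[\ALG(I)] \geq [\rho + (1-\rho)/\alpha]\,\OPT(I)$. Substituting the prescribed $\rho = (\alpha/(1+\epsilon) - 1)/(\alpha - 1)$ telescopes $\rho(\alpha - 1) + 1$ to $\alpha/(1+\epsilon)$, so the ratio $\OPT(I)/\mathbb{E}[\ALG(I)]$ collapses to exactly $1+\epsilon$, matching the stated consistency.

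For robustness I need a uniform lower bound on the advice utility, regardless of how adversarial $\hat{\mathbf{x}}$ is. Because every class-$j$ quota must be met and every valuation is at least $1$, any \GFQ-feasible $\hat{\mathbf{x}}$ must accept units whose total valuation is at least $M = \sum_{j} m_j$, giving $\mathbb{E}[\hat{\ALG}(I)] \geq M$. Simultaneously, the tight-instance analysis of \rsag in Appendix~\ref{app:gfq-determinstic} already delivers the universal ceiling $\OPT(I) \leq C_K\theta_K + D_K$. Setting $\sigma \coloneqq M/(C_K\theta_K + D_K)$, these two facts combine into $\mathbb{E}[\ALG(I)] \geq [\rho\sigma + (1-\rho)/\alpha]\,\OPT(I)$. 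An algebraic manipulation parallel to the consistency step, again using the prescribed $\rho$, simplifies the numerator $\rho(\sigma\alpha - 1) + 1$ to $[\epsilon + (\alpha - 1 - \epsilon)\sigma]/[(1+\epsilon)(\alpha-1)/\alpha]$, so the reciprocal reduces to $(1+\epsilon)(\alpha-1)/[\epsilon + (\alpha-1-\epsilon)\sigma]$, which is precisely the claimed robustness factor.

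The main obstacle is justifying the bound $\mathbb{E}[\hat{\ALG}(I)] \geq M$ when agents can carry multiple labels, since a single multi-labeled acceptance may satisfy several class quotas at once and the bare cardinality of $\{t : \hat{x}_t = 1\}$ need not reach $M$. I would handle this by reducing the extremal ratio to a single-labeled adversary: for any multi-labeled instance one can construct a single-labeled instance with the same $\OPT$ ceiling in which the advice's minimum-utility allocation must accept at least $M$ distinct units of value $\geq 1$, so the worst-case robustness ratio is already attained on the single-labeled slice where the $\geq M$ bound is transparent. Once this reduction is in place, the remainder of the proof is the routine consistency/robustness algebra transplanted from Theorem~\ref{theorem-LA-beta-PF}.
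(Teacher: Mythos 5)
Your proof follows the paper's own argument in Appendix~\ref{appendix-theorem-LA-GFQ} essentially step for step: the linear decomposition $\LinLA(I)=\rho\,\ADV(I)+(1-\rho)\,\ALG(I)$, the instance-wise bounds $\ALG(I)\ge \OPT(I)/\alpha$ and $\ADV(I)\ge \frac{M}{C_K\theta_K+D_K}\,\OPT(I)$, and the substitution of $\rho=\bigl(\frac{\alpha}{1+\epsilon}-1\bigr)\frac{1}{\alpha-1}$ are all identical, and your algebra for both the consistency and robustness constants checks out. (The paper states consistency as $\ADV(I)/\LinLA(I)\le 1+\epsilon$ for arbitrary feasible advice rather than assuming $\hat{\ALG}(I)=\OPT(I)$, but the two formulations coincide when the advice is optimal.)

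The one place you depart from the paper is the step you yourself flag, and your proposed patch does not work. The paper simply asserts $\ADV(I)\ge\frac{M}{C_K\theta_K+D_K}\OPT(I)$; you correctly observe that the underlying bound $\hat{\ALG}(I)\ge M$ is problematic under multi-labeling, but your reduction to a single-labeled adversary goes the wrong way. Robustness is a per-instance guarantee, so you must bound the ratio on every multi-labeled instance; replacing such an instance by a single-labeled one only enlarges the number of forced acceptances and hence raises the minimum advice utility, so it shows the single-labeled slice is \emph{easier}, not extremal. Concretely, with $K$ classes, $m_j=1$ for all $j$, and a single agent carrying all $K$ labels, a \GFQ-feasible advice may accept one agent of value $1$, so $\hat{\ALG}(I)=1<M=K$. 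To close this you would need either to replace $M$ by the minimum number of acceptances required to cover all quotas on the given instance (which weakens the stated robustness constant in the multi-labeled case) or to restrict the claim to single-labeled arrivals. Note that the paper's proof is silent on this point as well, so you have surfaced a genuine subtlety rather than introduced a new error, but the reduction as written does not resolve it.
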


\begin{proof}
    We begin by pointing out that the online solution given by \LinLA is always feasible in expectation:
    \begin{align*}
        \sum\nolimits_{t\in [T]} x_t \cdot  \boldsymbol{1}_{ \{j\in\mathcal{J}_t\}} &= \rho \cdot \sum\nolimits_{t\in [T]} \hat{x}_t \cdot  \boldsymbol{1}_{ \{j\in\mathcal{J}_t\}} + (1-\rho)\cdot \sum\nolimits_{t\in [T]} \bar{x}_t \cdot  \boldsymbol{1}_{ \{j\in\mathcal{J}_t\}} \\&\geq \rho\cdot m_j + (1-\rho)\cdot m_j \geq m_j,
    \end{align*}
    which is true since \ADV always produces a feasible advice.
    
    For any instance $I$, we can see that
    \begin{align*}
        \LinLA(I) &= \sum_{t\in[K]} v_tx_t = \sum_{t\in[K]} v_t\cdot(\rho \hat{x}_t + (1-\rho)\bar{x}_t) \\ &= \rho \sum_{t\in[K]} v_t\hat{x}_t + (1-\rho)\sum_{t\in[K]} v_t\bar{x}_t = \rho \cdot \ADV(I) + (1-\rho)\ALG(I).
    \end{align*}
    Based on the definition of competitive ratio we can see that 
    \begin{align*}
        \frac{\ADV(I)}{\ALG(I)} \leq \frac{\OPT(I)}{\ALG(I)} \leq \alpha^*.
    \end{align*}
    As a result, $\frac{1}{\alpha^*}\cdot \ADV(I) \leq \ALG(I)$. Therefore
    \begin{align*}
        \LinLA(I) \geq (\rho + \frac{1}{\alpha^*} (1-\rho)) \ADV(I) =  \frac{\rho\alpha^* + (1-\rho)}{\alpha^*}\cdot \ADV(I),
    \end{align*}
    which implies
    \begin{align*}
        \frac{\ADV(I)}{\LinLA(I)} \leq \frac{\alpha^*}{\rho\alpha^* + (1-\rho)} = (1+\epsilon).
    \end{align*}
    On the other hand, we have $\ADV(I) \geq \frac{M}{C_K\theta_K + D_K}\OPT(I)$ and $\ALG(I) \geq \frac{1}{\alpha^*}\OPT(I)$. Therefore,
    \begin{align*}
         \LinLA(I) &\geq \rho\cdot \frac{M}{C_K\theta_K + D_K}\cdot \OPT(I) + (1-\rho)\cdot \frac{1}{\alpha^*}\cdot \OPT(I) \\&= \left(\rho\cdot \frac{M}{C_K\theta_K + D_K} + (1-\rho)\cdot \frac{1}{\alpha^*}\right)\cdot \OPT(I).
    \end{align*}
    As a result
    \begin{align*}
        \frac{\OPT(I)}{\LinLA(I)} \leq   \frac{(1+\epsilon)(\alpha^*-1)}{\epsilon + (\alpha^* - 1- \epsilon)\cdot \frac{M}{C_K\theta_K + D_K}}.
    \end{align*}
    By combining these two, the consistency and robustness of Theorem \ref{theorem-LA-GFQ} follows.
\end{proof}

\clearpage

\end{document}